\theoremstyle{definition} \newtheorem{defn}{Definition}       
\theoremstyle{plain} \newtheorem{prop}[defn]{Proposition}     
\theoremstyle{plain} \newtheorem{thm}[defn]{Theorem}          
\theoremstyle{plain} \newtheorem{lem}[defn]{Lemma}            
\theoremstyle{plain}         
\theoremstyle{remark} \newtheorem{rmk}[defn]{Remark}          
\theoremstyle{remark}
\def\namedlabel#1#2{\begingroup
    #2%
    \def\@currentlabel{#2}%
    \phantomsection\label{#1}\endgroup
}
\begin{document}

\title{\textbf{PAC-Bayes under potentially heavy tails}}
\author{
  Matthew J.~Holland\thanks{Please direct correspondence to \texttt{matthew-h@ar.sanken.osaka-u.ac.jp}.}\\
  Osaka University
}
\date{} 

\maketitle

\begin{abstract}
We derive PAC-Bayesian learning guarantees for heavy-tailed losses, and obtain a novel optimal Gibbs posterior which enjoys finite-sample excess risk bounds at logarithmic confidence. Our core technique itself makes use of PAC-Bayesian inequalities in order to derive a robust risk estimator, which by design is easy to compute. In particular, only assuming that the first three moments of the loss distribution are bounded, the learning algorithm derived from this estimator achieves nearly sub-Gaussian statistical error, up to the quality of the prior.
\end{abstract}

\section{Introduction}\label{sec:intro}

More than two decades ago, the origins of PAC-Bayesian learning theory were developed with the goal of strengthening traditional PAC learning guarantees\footnote{PAC: Probably approximately correct \citep{valiant1984a}.} by explicitly accounting for prior knowledge \citep{shawetaylor1996a,mcallester1999a,catoni2004SLT}. Subsequent work developed finite-sample risk bounds for ``Bayesian'' learning algorithms which specify a distribution over the model \citep{mcallester2003a}. These bounds are controlled using the empirical risk and the relative entropy between ``prior'' and ``posterior'' distributions, and hold uniformly over the choice of the latter, meaning that the guarantees hold for data-dependent posteriors, hence the naming. Furthermore, choosing the posterior to minimize PAC-Bayesian risk bounds leads to practical learning algorithms which have seen numerous successful applications \citep{alquier2016a}.

Following this framework, a tremendous amount of work has been done to refine, extend, and apply the PAC-Bayesian framework to new learning problems. Tight risk bounds for bounded losses are due to \citet{seeger2002a} and \citet{maurer2004a}, with the former work applying them to Gaussian processes. Bounds constructed using the loss variance in a Bernstein-type inequality were given by \citet{seldin2012a}, with a data-dependent extension derived by \citet{tolstikhin2013a}. As stated by \citet{mcallester2013a}, virtually all the bounds derived in the original PAC-Bayesian theory ``only apply to bounded loss functions.'' This technical barrier was solved by \citet{alquier2016a}, who introduce an additional error term depending on the concentration of the empirical risk about the true risk. This technique was subsequently applied to the log-likelihood loss in the context of Bayesian linear regression by \citet{germain2016a}, and further systematized by \citet{begin2016a}. While this approach lets us deal with unbounded losses, naturally the statistical error guarantees are only as good as the confidence intervals available for the empirical mean deviations. In particular, strong assumptions on all of the moments of the loss are essentially unavoidable using the traditional tools espoused by \citet{begin2016a}, which means the ``heavy-tailed'' regime, where all we assume is that a few higher-order moments are finite (say finite variance and/or finite kurtosis), cannot be handled. A new technique for deriving PAC-Bayesian bounds even under heavy-tailed losses is introduced by \citet{alquier2018a}; their lucid procedure provides error rates even under heavy tails, but as the authors recognize, the rates are highly sub-optimal due to direct dependence on the empirical risk, leading in turn to sub-optimal algorithms derived from these bounds.\footnote{See work by \citet{catoni2012a,devroye2016a} and the references within for background on the fundamental limitations of the empirical mean for real-valued random variables.}

In this work, while keeping many core ideas of \citet{begin2016a} intact, using a novel approach we obtain exponential tail bounds on the excess risk using PAC-Bayesian bounds that hold even under heavy-tailed losses. Our key technique is to replace the empirical risk with a new mean estimator inspired by the dimension-free estimators of \citet{catoni2017a}, designed to be computationally convenient. We review some key theory in section \ref{sec:bg} before introducing the new estimator in section \ref{sec:estimator}. In section \ref{sec:bounds} we apply this estimator to the PAC-Bayes setting, deriving a new robust optimal Gibbs posterior. Most detailed proofs are relegated to section \ref{sec:tech_proofs} in the appendix.

\section{PAC-Bayesian theory based on the empirical mean}\label{sec:bg}

Let us begin by briefly reviewing the best available PAC-Bayesian learning guarantees under general losses. Denote by $\zz_{1},\ldots,\zz_{n} \in \ZZ$ a sequence of independent observations distributed according to common distribution $\ddist$. Denote by $\HH$ a model/hypothesis class, from which the learner selects a candidate based on the $n$-sized sample. The quality of this choice can be measured in a pointwise fashion using a loss function $l: \HH \times \ZZ \to \RR$, assumed to be $l \geq 0$. The learning task is to achieve a small risk, defined by $\risk(h) \defeq \exx_{\ddist}l(h;\zz)$. Since the underlying distribution is inherently unknown, the canonical proxy is
\begin{align*}
\emrisk(h) \defeq \frac{1}{n} \sum_{i=1}^{n} l(h;\zz_{i}), \quad h \in \HH.
\end{align*}
Let $\prior$ and $\post$ respectively denote ``prior'' and ``posterior'' distributions on the model $\HH$. The so-called Gibbs risk induced by $\post$, as well as its empirical counterpart are given by
\begin{align*}
\grisk_{\post} \defeq \exx_{\post} \risk = \int_{\HH} \risk(h) \, d\post(h), \quad \emgrisk_{\post} \defeq \exx_{\post} \emrisk = \frac{1}{n} \sum_{i=1}^{n} \int_{\HH} l(h;\zz_{i}) \, d\post(h).
\end{align*}
When our losses are almost surely bounded, lucid guarantees are available.
\begin{thm}[PAC-Bayes under bounded losses \citep{mcallester2003a,begin2016a}]\label{thm:PB_bounded_loss}
Assume $0 \leq l \leq 1$, and fix any arbitrary prior $\prior$ on $\HH$. For any confidence level $\delta \in (0,1)$, we have with probability no less than $1-\delta$ over the draw of the sample that
\begin{align*}
\grisk_{\post} \leq \emgrisk_{\post} + \sqrt{\frac{\KL(\post;\prior) + \log(2\sqrt{n}\delta^{-1})}{2n}}
\end{align*}
uniformly in the choice of $\post$.
\end{thm}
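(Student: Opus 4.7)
The plan is to follow the classical route via the Donsker–Varadhan change-of-measure inequality, Markov's inequality on the prior-averaged exponential moment, and a Hoeffding/Maurer-type moment bound per hypothesis. The nonstandard-looking $2\sqrt{n}$ in the logarithmic term is the tell that the proof should use the sharpened exponential moment bound of \citet{maurer2004a} combined with Pinsker's inequality, rather than a bare Hoeffding estimate (which would give a constant in place of $2\sqrt{n}$ only at the cost of $\lambda$-optimization).

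First, I would invoke the Donsker–Varadhan variational representation: for any measurable $\phi:\HH \to \RR$ and any $\post \ll \prior$,
\begin{align*}
\exx_{\post}\phi(h) \;\leq\; \KL(\post;\prior) + \log \exx_{\prior} e^{\phi(h)}.
\end{align*}
Apply this to $\phi(h) = 2n\,(\risk(h) - \emrisk(h))^{2}$. By Jensen's inequality applied to the convex map $x \mapsto x^{2}$, the left-hand side dominates $2n\,(\grisk_{\post} - \emgrisk_{\post})^{2}$, so
\begin{align*}
2n\,(\grisk_{\post} - \emgrisk_{\post})^{2} \;\leq\; \KL(\post;\prior) + \log \exx_{\prior} e^{2n(\risk(h)-\emrisk(h))^{2}}.
\end{align*}

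Next, I would control the random quantity $\exx_{\prior} e^{2n(\risk(h)-\emrisk(h))^{2}}$ in high probability over the sample. Markov's inequality yields, with probability at least $1-\delta$,
\begin{align*}
\exx_{\prior} e^{2n(\risk(h)-\emrisk(h))^{2}} \;\leq\; \frac{1}{\delta}\,\exx_{S}\exx_{\prior} e^{2n(\risk(h)-\emrisk(h))^{2}} \;=\; \frac{1}{\delta}\,\exx_{\prior}\exx_{S} e^{2n(\risk(h)-\emrisk(h))^{2}},
\end{align*}
using Fubini's theorem in the final step, since $\prior$ is data-independent. For each fixed $h$, since $l(h;\cdot) \in [0,1]$, Pinsker's inequality gives $2(\risk(h)-\emrisk(h))^{2} \leq \mathrm{kl}(\emrisk(h)\Vert \risk(h))$, and then Maurer's bound furnishes $\exx_{S} e^{n\,\mathrm{kl}(\emrisk(h)\Vert\risk(h))} \leq 2\sqrt{n}$. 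Chaining these gives the uniform estimate $\exx_{S} e^{2n(\risk(h)-\emrisk(h))^{2}} \leq 2\sqrt{n}$, which survives integration against $\prior$.

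Combining the two displays, with probability at least $1-\delta$ and uniformly over $\post$,
\begin{align*}
2n\,(\grisk_{\post} - \emgrisk_{\post})^{2} \;\leq\; \KL(\post;\prior) + \log\!\left(\tfrac{2\sqrt{n}}{\delta}\right),
\end{align*}
and rearranging recovers the stated bound; uniformity in $\post$ comes for free from Donsker–Varadhan, which is valid for every $\post \ll \prior$ simultaneously on the same high-probability event. The only real obstacle is the sharp moment bound $\exx_{S} e^{n\,\mathrm{kl}(\emrisk(h)\Vert\risk(h))} \leq 2\sqrt{n}$: this is the one place where boundedness of the loss is used in an essential way, and it is precisely the ingredient that fails in the heavy-tailed setting the paper sets out to address.
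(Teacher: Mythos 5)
Your proof is correct and follows essentially the same route as the sources the paper cites for this result \citep{mcallester2003a,maurer2004a,begin2016a}: the paper itself states Theorem \ref{thm:PB_bounded_loss} without proof, and your change-of-measure plus Markov/Fubini argument is exactly the machinery it develops elsewhere (Theorem \ref{thm:main_PB_identity} and inequality (\ref{eqn:KL_change_of_measure})), with Maurer's moment bound supplying the $2\sqrt{n}$ and Pinsker plus Jensen giving the square-root form. The only caveat worth recording is that the estimate $\exx_{S}\,e^{n\,\mathrm{kl}(\emrisk(h)\Vert\risk(h))} \leq 2\sqrt{n}$ is established by Maurer for $n \geq 8$, a restriction that the theorem statement (following the literature) quietly suppresses.
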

\noindent Since the ``good event'' where the inequality in Theorem \ref{thm:PB_bounded_loss} holds is valid for any choice of $\post$, the result holds even when $\post$ depends on the sample, which justifies calling it a posterior distribution. Optimizing this upper bound with respect to $\post$ leads to the so-called optimal Gibbs posterior, which takes a form which is readily characterized (cf.~Remark \ref{rmk:compare_traditional_gibbs}).

The above results fall apart when the loss is unbounded, and meaningful extensions become challenging when exponential moment bounds are not available. As highlighted in section \ref{sec:intro} above, over the years, the analytical machinery has evolved to provide general-purpose PAC-Bayesian bounds even under heavy-tailed data. The following theorem of \citet{alquier2018a} extends the strategy of \citet{begin2016a} to obtain bounds under the weakest conditions we know of.
\begin{thm}[PAC-Bayes under heavy-tailed losses \citep{alquier2018a}]
Take any $p > 1$ and set $q = p / (p-1)$. For any confidence level $\delta \in (0,1)$, we have with probability no less than $1-\delta$ over the draw of the sample that
\begin{align*}
\grisk_{\post} \leq \emgrisk_{\post} + \left( \frac{\exx_{\prior}|\emrisk-\risk|^{q}}{\delta} \right)^{\frac{1}{q}} \left( \int_{\HH} \left(\frac{d\post}{d\prior}\right)^{p} d\prior \right)^{\frac{1}{p}}
\end{align*}
uniformly in the choice of $\post$.
\end{thm}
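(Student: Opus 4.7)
The plan is to mimic the bounded-loss strategy of \citet{begin2016a}, but to replace the Donsker--Varadhan (exponential) change-of-measure step with H\"older's inequality, thereby removing any need for exponential moment bounds on the loss. Throughout, let $p,q > 1$ denote the conjugate exponents fixed in the statement.

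First, I would rewrite the excess Gibbs risk as an integral against the prior:
\[
\grisk_{\post} - \emgrisk_{\post} \;=\; \int_{\HH} (\risk(h) - \emrisk(h)) \, \frac{d\post}{d\prior}(h) \, d\prior(h),
\]
valid whenever $\post \ll \prior$ (otherwise the bound is vacuous, since the $L^{p}(\prior)$ norm of $d\post/d\prior$ is infinite). H\"older's inequality with exponents $q$ and $p$ then separates the sample-dependent and posterior-dependent pieces:
\[
\grisk_{\post} - \emgrisk_{\post} \;\leq\; \left( \int_{\HH} |\risk(h) - \emrisk(h)|^{q} \, d\prior(h) \right)^{\!1/q} \left( \int_{\HH} \left( \frac{d\post}{d\prior} \right)^{\!p} d\prior \right)^{\!1/p}.
\]
The key structural feature of this inequality is that the first factor carries no $\post$-dependence whatsoever; denote that sample-random quantity by $X^{1/q}$.

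Next, I would convert control of $\exx X$ into a high-probability bound via Markov's inequality. By Fubini,
\[
\exx X \;=\; \int_{\HH} \exx |\risk(h) - \emrisk(h)|^{q} \, d\prior(h) \;=\; \exx_{\prior} |\emrisk - \risk|^{q},
\]
so Markov yields that, with probability at least $1 - \delta$ over the sample, $X \leq \exx_{\prior} |\emrisk - \risk|^{q} / \delta$. Substituting this into the H\"older bound recovers exactly the claimed inequality.

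Finally, uniformity over $\post$ comes for free: the only probabilistic step is applied to $X$, which has no $\post$-dependence, so on the $(1-\delta)$-probability event $\{X \leq \exx_{\prior} |\emrisk - \risk|^{q}/\delta\}$ the inequality holds simultaneously for every $\post \ll \prior$, including data-dependent ones. I anticipate no essential obstacle here: the only real design choice is how to split the integrand into a sample-random factor and a $\post$-dependent factor, and H\"older with the conjugate pair $(p,q)$ is the natural and tightest split compatible with raw $q$-th moment control of $|\emrisk - \risk|$, recovering the Cauchy--Schwarz variant at $p=q=2$.
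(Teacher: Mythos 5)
Your proof is correct, and it is essentially the canonical argument: the paper itself states this theorem without proof, citing \citet{alquier2018a}, and the derivation there (in the spirit of \citet{begin2016a}) proceeds exactly as you propose---rewrite $\grisk_{\post}-\emgrisk_{\post}$ as an integral against $\prior$ weighted by $d\post/d\prior$, apply H\"older with the conjugate pair $(p,q)$, then control the $\post$-independent factor by Markov's inequality together with Fubini, which yields uniformity in $\post$ on the single good event. Note only that the quantity $\exx_{\prior}|\emrisk-\risk|^{q}$ in the statement is to be read as an expectation over both the prior and the sample (as your Fubini step makes explicit, and as the paper's subsequent use of $\exx_{\prior}|\emrisk-\risk|^{2}\leq V_{\prior}/n$ confirms).
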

\noindent For concreteness, consider the case of $p=2$, where $q=2/(2-1)=2$, and assume that the variance of the loss is $\vaa_{\ddist} l(h;\zz)$ is $\prior$-finite, namely that
\begin{align*}
V_{\prior} \defeq \int_{\HH} \vaa_{\ddist} l(h;\zz) \, d\prior(h) < \infty.
\end{align*}
From Proposition 4 of \citet{alquier2018a}, we have $\exx_{\prior}|\emrisk-\risk|^{2} \leq V_{\prior}/n$. It follows that on the high-probability event, we have
\begin{align*}
\grisk_{\post} \leq \emgrisk_{\post} + \sqrt{\frac{V_{\prior}}{n\,\delta} \left(\int_{\HH} \left(\frac{d\post}{d\prior}\right)^{2} d\prior\right) }
\end{align*}
While the $\sqrt{n}$ rate and dependence on a divergence between $\prior$ and $\post$ are similar, note that the dependence on the confidence level $\delta \in (0,1)$ is polynomial; compare this with the logarithmic dependence available in Theorem \ref{thm:PB_bounded_loss} above when the losses were bounded.

For comparison, our main result of section \ref{sec:bounds} is a uniform bound on the Gibbs risk: with probability no less than $1-\delta$, we have
\begin{align*}
\grisk_{\post} \leq \emgrisk_{\post,\trunc} + \frac{1}{\sqrt{n}}\left( \KL(\post;\prior) + \frac{\log(8\pi \bdmnt \delta^{-2})}{2} + \bdmnt + \prior_{n}^{\ast}(\HH) - 1 \right) + O\left(\frac{1}{n}\right)
\end{align*}
where $\emgrisk_{\post,\trunc}$ is an estimator of $\grisk_{\post}$ defined in section \ref{sec:estimator}, $\prior_{n}^{\ast}(\HH)$ is a term depending on the quality of prior $\prior$, and the key constants are bounds such that for all $h \in \HH$ we have $\bdmnt \geq \exx_{\ddist}l(h;\zz)^{2}$. As long as the first three moments are finite, this guarantee holds, and thus both sub-Gaussian and heavy-tailed losses (e.g., with infinite higher-order moments) are permitted. Given any valid $\bdmnt$, the PAC-Bayesian upper bound above can be minimized in $\post$ based on the data, and thus an optimal Gibbs posterior can also be computed in practice. In section \ref{sec:bounds}, we characterize this ``robust posterior.''

\section{A new estimator using smoothed Bernoulli noise}\label{sec:estimator}

\paragraph{Notation}

In this section, we are dealing with the specific problem of robust mean estimation, thus we specialize our notation here slightly. Data observations will be $x_{1},\ldots,x_{n} \in \RR$, assumed to be independent copies of $x \sim \ddist$. Denote the index set $[k] \defeq \{1,2,\ldots,k\}$. Write $\proball(\Omega,\AAcal)$ for the set of all probability measures defined on the measurable space $(\Omega,\AAcal)$. Write $\KL(P,Q)$ for the relative entropy between measures $P$ and $Q$ (also known as the KL divergence; definition in appendix). We shall typically suppress $\AAcal$ and even $\Omega$ in the notation when it is clear from the context. Let $\trunc$ be a bounded, non-decreasing function such that for some $b>0$ and all $u \in \RR$,
\begin{align}\label{eqn:trunc_keyprop}
-\log\left(1 - u + u^{2}/b\right) \leq \trunc(u) \leq \log\left(1 + u + u^{2}/b\right).
\end{align}
As a concrete and analytically useful example, we shall use the piecewise polynomial function of \citet{catoni2017a}, defined by
\begin{align}\label{eqn:trunc_CG17_defn}
\trunc(u) \defeq
\begin{cases}
u - u^{3}/6, & -\sqrt{2} \leq u \leq \sqrt{2}\\
2\sqrt{2}/3, & u > \sqrt{2}\\
-2\sqrt{2}/3, & u < -\sqrt{2}
\end{cases}
\end{align}
which satisfies (\ref{eqn:trunc_keyprop}) with $b=2$, and is pictured in Figure \ref{fig:trunc_catgiu} with the two key bounds.\footnote{Slightly looser bounds hold for an analogous procedure using a Huber-type influence function with $b=1$.}

\begin{figure}[t]
\centering
\includegraphics[width=0.5\textwidth]{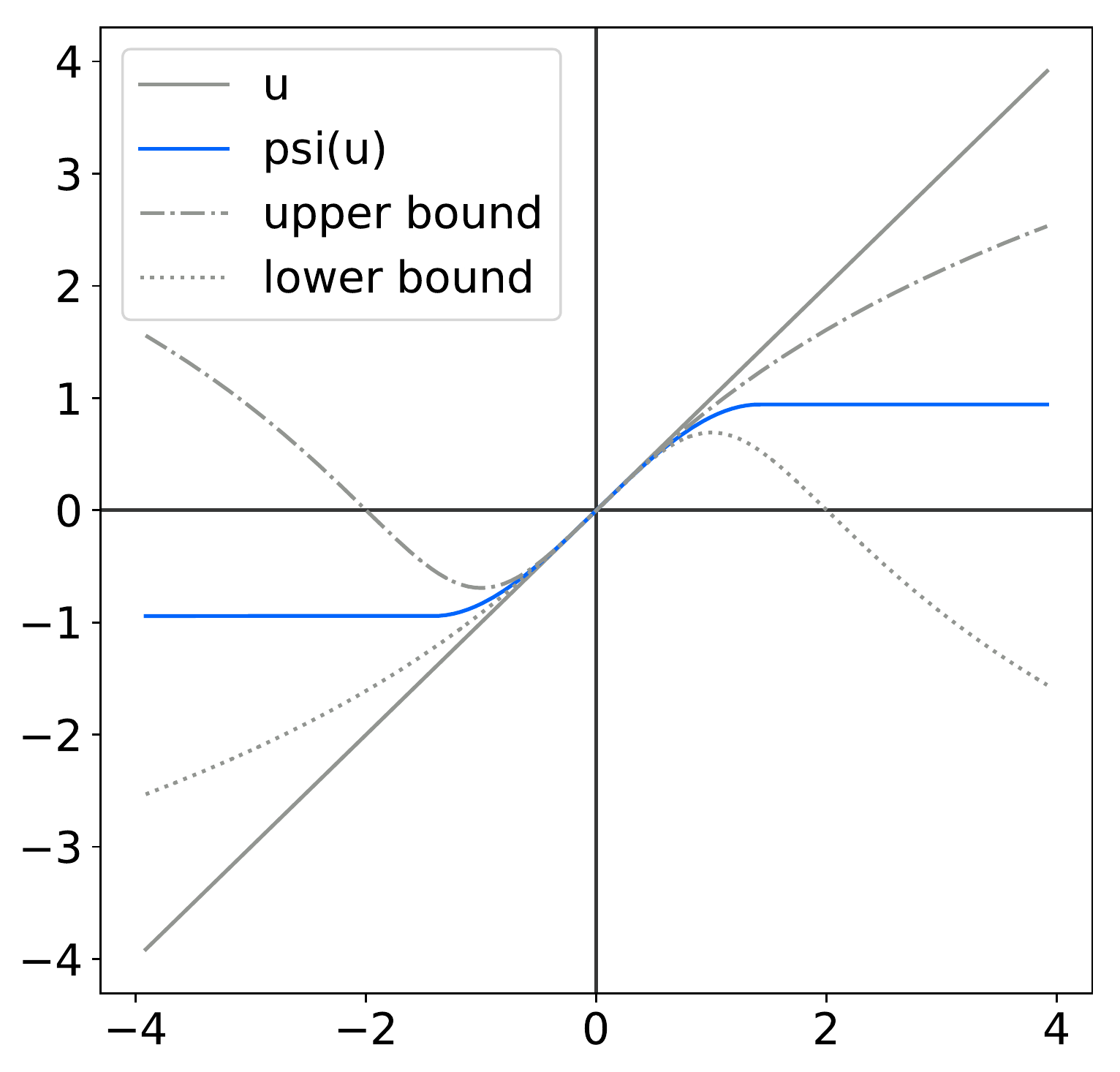}
\caption{Graph of the Catoni function $\trunc(u)$ over $\pm \sqrt{2} \pm 2.5$.}
\label{fig:trunc_catgiu}
\end{figure}

\paragraph{Estimator definition}

We consider a straightforward procedure, in which the data are subject to a soft truncation after re-scaling, defined by
\begin{align}\label{eqn:defn_est_bernoulli}
\xhat \defeq \frac{s}{n} \sum_{i=1}^{n} \trunc\left(\frac{x_{i}}{s}\right)
\end{align}
where $s>0$ is a re-scaling parameter. Depending on the setting of $s$, this function can very closely approximate the sample mean, and indeed modifying this scaling parameter controls the bias of this estimator in a direct way, which can be quantified as follows. As the scale grows, note that
\begin{align*}
s \trunc\left(\frac{x}{s}\right) = x - \frac{x^{3}}{6s^{2}} \to x, \quad \text{ as } s \to \infty
\end{align*}
which implies that taking expectation with respect to the sample and $s \to \infty$, in the limit this estimator is unbiased, with
\begin{align*}
\exx \left( \frac{s}{n} \sum_{i=1}^{n} \trunc\left(\frac{x_{i}}{s}\right) \right) = \exx_{\ddist} x - \frac{\exx_{\ddist} x^{3}}{6s^{2}} \to \exx_{\ddist} x.
\end{align*}
On the other hand, taking $s$ closer to zero implies that more observations will be truncated. Taking $s$ small enough,\footnote{More precisely, taking $s \leq \min\{|x_{i}|: i \in [n]\}/\sqrt{2}$.} we have 
\begin{align*}
\frac{s}{n} \sum_{i=1}^{n} \trunc\left(\frac{x_{i}}{s}\right) = \frac{2\sqrt{2}s}{3n}\left(|\II_{+}|-|\II_{-}|\right),
\end{align*}
which converges to zero as $s \to 0$. Here the positive/negative indices are $\II_{+} \defeq \{i \in [n]: x_{i} > 0\}$ and $\II_{-} \defeq \{i \in [n]: x_{i} < 0\}$. Thus taking $s$ too small means that only the signs of the observations matter, and the absolute value of the estimator tends to become too small.

\paragraph{High-probability deviation bounds for $\xhat$}

We are interested in high-probability bounds on the deviations $|\xhat-\exx_{\ddist} x|$ under the weakest possible assumptions on the underlying data distribution. To obtain such guarantees in a straightforward manner, we make the simple observation that the estimator $\xhat$ defined in (\ref{eqn:defn_est_bernoulli}) can be related to an estimator with smoothed noise as follows. Let $\epsilon_{1},\ldots,\epsilon_{n}$ be an iid sample of noise $\epsilon \in \{0,1\}$ with distribution $\text{Bernoulli}(\theta)$ for some $0 < \theta < 1$. Then, taking expectation with respect to the noise sample, one has that
\begin{align}\label{eqn:bridge_identity}
\xhat = \frac{1}{\theta} \, \exx\left( \frac{s}{n} \sum_{i=1}^{n} \trunc\left(\frac{x_{i}\,\epsilon_{i}}{s}\right) \right).
\end{align}
This simple observation becomes useful to us in the context of the following technical fact.

\begin{lem}\label{lem:cat17type_KLbound}
Assume we are given some independent data $x_{1},\ldots,x_{n}$, assumed to be copies of the random variable $x \sim \ddist$. In addition, let $\epsilon_{1},\ldots,\epsilon_{n}$ similarly be independent observations of ``strategic noise,'' with distribution $\epsilon \sim \post$ that we can design. Fix an arbitrary prior distribution $\prior$, and consider $f:\RR^{2} \to \RR$, assumed to be bounded and measurable. Write $\KL(\post;\prior)$ for the Kullback-Leibler divergence between distributions $\post$ and $\prior$. It follows that with probability no less than $1-\delta$ over the random draw of the sample, we have
\begin{align*}
\exx \left(\frac{1}{n} \sum_{i=1}^{n} f(x_{i},\epsilon_{i})\right) \leq \int \log\exx_{\ddist}\exp(f(x,\epsilon)) \, d\post(\epsilon) + \frac{\KL(\post;\prior) + \log(\delta^{-1})}{n},
\end{align*}
uniform in the choice of $\post$, where expectation on the left-hand side is over the noise sample.
\end{lem}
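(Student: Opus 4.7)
The plan is to combine the Donsker--Varadhan change-of-measure inequality on the noise sample space with Markov's inequality applied to a data-dependent exponential whose mean is exactly~$1$; boundedness of $f$ keeps all integrability and Fubini justifications routine.

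First I would invoke Donsker--Varadhan on the product measures $\post^{n}$ and $\prior^{n}$ over the noise sample: for any measurable $g:\RR^{n}\to\RR$,
\begin{align*}
\exx_{\post^{n}} g(\epsilon_{1},\ldots,\epsilon_{n}) \leq \KL(\post^{n};\prior^{n}) + \log\exx_{\prior^{n}} \exp(g(\epsilon_{1},\ldots,\epsilon_{n})),
\end{align*}
using the standard tensorization identity $\KL(\post^{n};\prior^{n}) = n\KL(\post;\prior)$. I would then choose
\begin{align*}
g(\epsilon_{1},\ldots,\epsilon_{n}) \defeq \sum_{i=1}^{n} f(x_{i},\epsilon_{i}) - \sum_{i=1}^{n} \log\exx_{\ddist} \exp(f(x,\epsilon_{i})),
\end{align*}
which makes the LHS of Donsker--Varadhan equal to $n\exx_{\post^{n}}\bigl(\tfrac{1}{n}\sum_{i}f(x_{i},\epsilon_{i})\bigr) - n\int \log\exx_{\ddist}\exp(f(x,\epsilon))\,d\post(\epsilon)$, matching the target quantities after dividing by~$n$.

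It then remains to control the data-dependent quantity $Z(x_{1},\ldots,x_{n}) \defeq \exx_{\prior^{n}} \exp(g(\epsilon_{1},\ldots,\epsilon_{n}))$ using only the data. I would show $\exx_{\ddist^{n}} Z = 1$ by swapping the order of the data and noise expectations (Fubini is legitimate since $f$ is bounded) and exploiting the independence of the $x_{i}$: conditional on fixed noise, $\exx_{\ddist^{n}} \exp(\sum_{i} f(x_{i},\epsilon_{i})) = \prod_{i}\exx_{\ddist} \exp(f(x,\epsilon_{i}))$, which cancels the normalizer in $g$ exactly, leaving $\exx_{\ddist^{n}} Z = \exx_{\prior^{n}} 1 = 1$. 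A single Markov's inequality applied to $Z \geq 0$ then yields $\log Z \leq \log(\delta^{-1})$ with probability at least $1-\delta$ over the data. Combining this with Donsker--Varadhan and dividing by $n$ produces the claimed bound; uniformity in $\post$ is automatic because the high-probability event involves only the data and the prior (through $Z$), while Donsker--Varadhan holds simultaneously for every $\post$, so $\post$ may be chosen as an arbitrary measurable function of the sample. There is no real obstacle here, only the standard care needed to justify Fubini and tensorization, both immediate from the boundedness of~$f$.
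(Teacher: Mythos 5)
There is a genuine gap, and it sits exactly where you declared the argument ``routine'': the choice of the space on which you apply the change of measure. By running Donsker--Varadhan on the product measures $\post^{n}$ and $\prior^{n}$ over the noise sample, tensorization gives you $\KL(\post^{n};\prior^{n}) = n\,\KL(\post;\prior)$, so after you divide the whole inequality by $n$ you obtain
\begin{align*}
\exx \left(\frac{1}{n} \sum_{i=1}^{n} f(x_{i},\epsilon_{i})\right) \leq \int \log\exx_{\ddist}\exp(f(x,\epsilon)) \, d\post(\epsilon) + \KL(\post;\prior) + \frac{\log(\delta^{-1})}{n},
\end{align*}
in which the relative entropy is \emph{not} divided by $n$. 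This is strictly weaker than the claimed bound, and the factor $1/n$ on the KL term is not cosmetic: it is precisely what delivers the $\sqrt{\log(\delta^{-1})/n}$ deviation rate in Proposition \ref{prop:devdb_pointwise} and everything downstream. Your individual steps (Fubini, $\exx_{\ddist^{n}} Z = 1$, Markov, uniformity of the event in $\post$) are fine, but the final sentence ``dividing by $n$ produces the claimed bound'' is where the bookkeeping fails. Nor can this be patched by replacing $\post^{n}$ with a coupled posterior on the product space (e.g.\ the diagonal one): its KL against $\prior^{n}$ still grows linearly in $n$.

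The paper's proof avoids this by keeping the change of measure on a \emph{single} noise variable. It applies Theorem \ref{thm:main_PB_identity} with
\begin{align*}
h(\epsilon) = \sum_{i=1}^{n} f(x_{i},\epsilon) - n \log\exx_{\ddist}\exp(f(x,\epsilon)),
\end{align*}
where the same $\epsilon$ appears in every summand; the unit-mean property of the resulting exponential comes from the independence of the \emph{data} (conditionally on $\epsilon$, the numerator factors into $\prod_{i}\exx_{\ddist}\exp(f(x_{i},\epsilon))$, cancelling the normalizer), and Markov plus the supremum form of Theorem \ref{thm:main_PB_identity} gives, uniformly in $\post$, the bound $\exx_{\post}h(\epsilon) \leq \KL(\post;\prior) + \log(\delta^{-1})$ with a single KL term. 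Only at the very end is the iid structure of the noise used, to rewrite $\frac{1}{n}\sum_{i}\int f(x_{i},\epsilon)\,d\post(\epsilon)$ as the expectation over the noise sample; this is legitimate because the left-hand side of the lemma depends only on the marginal law of each $\epsilon_{i}$. In short: the data play the role of the ``sample'' and the single noise variable plays the role of the ``hypothesis,'' whereas your setup inverts this and pays the KL once per coordinate. Rewriting your argument with this single-variable $h$ (your $Z$ computation then goes through essentially verbatim) recovers the stated inequality.
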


\noindent The special case of interest here is $f(x,\epsilon) = \trunc(x\epsilon/s)$. Using (\ref{eqn:trunc_keyprop}) and Lemma \ref{lem:cat17type_KLbound}, with prior $\prior = \text{Bernoulli}(1/2)$ and posterior $\post = \text{Bernoulli}(\theta)$, it follows that on the $1-\delta$ high-probability event, uniform in the choice of $0 < \theta < 1$, we have
\begin{align}
\label{eqn:xhat_KL_bound}
\left(\frac{\theta}{s}\right)\xhat & \leq \int \left( \frac{\epsilon \exx_{\ddist}x}{s} + \frac{\epsilon^{2}\exx_{\ddist}x^{2}}{2s^{2}} \right) \, d\post(\epsilon) + \frac{\KL(\post;\prior) + \log(\delta^{-1})}{n}\\
\nonumber
& = \frac{\theta \exx_{\ddist}x}{s} + \frac{\theta \exx_{\ddist} x^{2}}{2s^{2}} + \frac{1}{n} \left(\theta \log(2\theta) + (1-\theta)\log(2(1-\theta)) + \log(\delta^{-1})\right)
\end{align}
where we have used the fact that $\exx \epsilon^{2} = \exx \epsilon = \theta$ in the Bernoulli case. Dividing both sides by $(\theta/s)$ and optimizing this as a function of $s>0$ yields a closed-form expression for $s$ depending on the second moment, the confidence $\delta$, and $\theta$. Analogous arguments yield lower bounds on the same quantity. Taking these facts together, we have the following proposition, which says that assuming only finite second moments $\exx_{\ddist} x^{2} < \infty$, the proposed estimator achieves exponential tail bounds scaling with the second non-central moment.
\begin{prop}[Concentration of deviations]\label{prop:devdb_pointwise}
Scaling with $s^{2} = n \exx_{\ddist} x^{2} / 2\log(\delta^{-1})$, the estimator defined in (\ref{eqn:defn_est_bernoulli}) satisfies
\begin{align}\label{eqn:bound_new_1d}
|\xhat-\exx_{\ddist}x| \leq \sqrt{\frac{2\exx_{\ddist}x^{2}\log(\delta^{-1})}{n}}
\end{align}
with probability at least $1-2\delta$.
\end{prop}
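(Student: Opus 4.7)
The upper-tail half of the argument is essentially already laid out in the discussion leading up to the proposition. Starting from (\ref{eqn:xhat_KL_bound}) and dividing both sides by $\theta/s > 0$ yields, uniformly in $\theta \in (0,1)$ and on an event of probability at least $1-\delta$,
\begin{equation*}
\xhat - \exx_{\ddist} x \leq \frac{\exx_{\ddist} x^{2}}{2s} + \frac{s}{\theta n}\bigl(\theta\log(2\theta) + (1-\theta)\log(2(1-\theta)) + \log(\delta^{-1})\bigr).
\end{equation*}
From here I would pick $\theta$ so that the Bernoulli KL contribution is benign (e.g.\ $\theta = 1/2$ zeros out $\KL(\post;\prior)$, while $\theta \to 1$ trades the $1/\theta$ factor for a $\log 2$ from the KL) and then optimize the bias/concentration balance $\exx_{\ddist} x^{2}/(2s) + C s / n$ in $s > 0$. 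Differentiation gives $s \propto \sqrt{n\exx_{\ddist} x^{2}/\log(\delta^{-1})}$, matching the stated scaling, and at the optimum the bias term and the concentration term coincide, producing the claimed rate of order $\sqrt{\exx_{\ddist} x^{2}\log(\delta^{-1})/n}$.

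The only remaining step is a matching lower bound for $\xhat$. I would rerun the same argument with $f(x,\epsilon) = -\trunc(x\epsilon/s)$ in Lemma \ref{lem:cat17type_KLbound}, this time invoking the \emph{lower} half of (\ref{eqn:trunc_keyprop}): from $-\trunc(u) \leq \log(1 - u + u^{2}/b)$ combined with $\log(1+t) \leq t$, taking expectation first over $x \sim \ddist$ and then over $\epsilon \sim \post = \text{Bernoulli}(\theta)$ reproduces the analogue of (\ref{eqn:xhat_KL_bound}) with the sign of the linear-in-$x$ term flipped. After dividing by $\theta/s$, this furnishes a lower bound $\xhat - \exx_{\ddist} x \geq -\bigl[\exx_{\ddist} x^{2}/(2s) + (s/(\theta n))(\KL(\post;\prior) + \log\delta^{-1})\bigr]$ on a second event of probability at least $1-\delta$. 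A union bound then combines the two events into a single event of probability at least $1-2\delta$, on which the two-sided deviation bound (\ref{eqn:bound_new_1d}) holds at the scaling $s^{2} = n\exx_{\ddist} x^{2}/(2\log\delta^{-1})$.

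Since Lemma \ref{lem:cat17type_KLbound} and the two-sided bound (\ref{eqn:trunc_keyprop}) are already doing all of the real work, I do not anticipate a substantive obstacle. The one point requiring some care is the choice of $\theta$: different selections shuffle a $\log 2$ between the KL and the $1/\theta$ factor, and one has to verify that the subsequent $s$-optimization lands on the exact constants advertised in the proposition rather than something off by a small multiplicative factor. Apart from this bookkeeping, everything reduces to a routine bias--variance balance followed by a union bound.
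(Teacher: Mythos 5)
Your proposal is correct and follows essentially the same route as the paper: the upper tail comes from (\ref{eqn:xhat_KL_bound}) with a fixed $\theta$ and an optimization in $s$, and your lower tail via $f(x,\epsilon) = -\trunc(x\epsilon/s)$ is formally identical to the paper's device of keeping $f = \trunc(x\epsilon/s)$ but putting the Bernoulli noise on $\{-1,0\}$ (since $\trunc$ is odd, the two are the same inequality), with the same union bound over the two $(1-\delta)$ events because the good event of Lemma \ref{lem:cat17type_KLbound} changes. Your closing caveat about the $\theta$/constant bookkeeping is warranted, but it concerns the paper's own statement (which simply sets $\theta = 1/2$ and invokes ``calculus'') rather than any gap in your argument.
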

\begin{proof}[Proof of Proposition \ref{prop:devdb_pointwise}]
First, note that the upper bound derived from (\ref{eqn:xhat_KL_bound}) holds uniformly in the choice of $\theta$ on a $(1-\delta)$ high-probability event. Setting $\theta = 1/2$ and solving for the optimal $s>0$ setting is just calculus. It remains to obtain a corresponding lower bound on $\xhat - \exx_{\ddist}x$. To do so, consider the analogous setting of Bernoulli $\prior$ and $\post$, but this time on the domain $\{-1,0\}$, with $\post\{-1\}=\theta$ and $\prior\{-1\}=1/2$. Using (\ref{eqn:trunc_keyprop}) and Lemma \ref{lem:cat17type_KLbound} again, we have
\begin{align*}
\left(\frac{-\theta}{s}\right)\xhat & \leq \frac{-\theta \exx_{\ddist}x}{s} + \frac{\theta \exx_{\ddist} x^{2}}{2s^{2}} + \frac{1}{n} \left(\theta \log(2\theta) + (1-\theta)\log(2(1-\theta)) + \log(\delta^{-1})\right)
\end{align*}
where we note $\exx_{\post}\epsilon = -\theta$ and  $\exx_{\post}\epsilon^{2} = \exx_{\post}|\epsilon| = \theta$. This yields a high-probability lower bound in the desired form when we set $\theta = 1/2$, since an upper bound on $-\xhat + \exx_{\ddist}x$ is equivalent to a lower bound on $\xhat-\exx_{\ddist}x$. However, since we have changed the prior in this case, the high-probability event here need not be the same as that for the upper bound, and as such, we must take a union bound over these two events to obtain the desired final result.
\end{proof}

\begin{rmk}
While the above bound (\ref{eqn:bound_new_1d}) depends on the true second moment, as is clear from the proof outlined above, the result is easily extended to hold for any valid upper bound on the moment, which is what will inevitably be used in practice.
\end{rmk}

\paragraph{Centered estimates}

Note that the bound (\ref{eqn:bound_new_1d}) depends on the second moment of the underlying data; this is in contrast to M-estimators which due to a natural ``centering'' of the data typically have tail bounds depending on the variance. This results in a sensitivity to the absolute value of the location of the distribution, e.g., on a distribution with unit variance and $\exx_{\ddist} x = 0$ will tend to be much better than a distribution with $\exx_{\ddist} x = 10^4$. Fortunately, a simple centering strategy works well to alleviate this sensitivity, as follows.

Without loss of generality, assume that the first $0 < k < n$ estimates are used for constructing a shifting device, with the remaining $n-k > 0$ points left for running the usual routine on shifted data. More concretely, define
\begin{align}
\label{eqn:xm_centered_shift}
\overbar{x}_{\trunc} = \frac{\overbar{s}}{k} \sum_{i=1}^{k}\trunc\left(\frac{x_{i}}{\overbar{s}}\right), \text{ where } \overbar{s}^{2} = \frac{k \exx_{\ddist}x^{2}}{2\log(\delta^{-1})}.
\end{align}
From (\ref{eqn:bound_new_1d}) in Proposition \ref{prop:devdb_pointwise}, we have
\begin{align*}
|\overbar{x}_{\trunc} - \exx_{\ddist}x| \leq \varepsilon_{k} \defeq \sqrt{\frac{2\exx_{\ddist}x^{2}\log(\delta^{-1})}{k}}
\end{align*}
on an event with probability no less than $1-2\delta$, over the draw of the $k$-sized sub-sample. Using this, we shift the remaining data points as $x_{i}^{\prime} \defeq x_{i}-\overbar{x}_{\trunc}$. Note that the second moment of this data is bounded as follows:
\begin{align*}
\exx_{\ddist}(x^{\prime})^{2} & = \exx_{\ddist}\left( x^{\prime} - \exx_{\ddist}x^{\prime} \right)^{2} + \left(\exx_{\ddist}x^{\prime}\right)^{2}\\
& = \exx_{\ddist}\left( (x-\overbar{x}_{\trunc}) - \exx_{\ddist}(x-\overbar{x}_{\trunc}) \right)^{2} + \left(\exx_{\ddist}(x-\overbar{x}_{\trunc})\right)^{2}\\
& = \exx_{\ddist}\left( x - \exx_{\ddist}x \right)^{2} + (\overbar{x}_{\trunc} - \exx_{\ddist}x)^{2}\\
& \leq \vaa_{\ddist} x + \varepsilon_{k}^{2}.
\end{align*}
Passing these shifted points through (\ref{eqn:defn_est_bernoulli}) with analogous second moment bounds used for scaling, we have
\begin{align}
\label{eqn:xm_centered_main}
\xhat^{\prime} = \frac{s}{(n-k)} \sum_{i=k+1}^{n}\trunc\left(\frac{x_{i}^{\prime}}{s}\right), \text{ where } s^{2} = \frac{(n-k)(\vaa_{\ddist} x + \varepsilon_{k}^{2})}{2\log(\delta^{-1})}.
\end{align}
Shifting the resulting output back to the original location by adding 
and shifting $\xhat^{\prime}$ back to the original location by adding $\overbar{x}_{\trunc}$, conditioned on $\overbar{x}_{\trunc}$, we have by (\ref{eqn:bound_new_1d}) again that
\begin{align*}
|(\xhat^{\prime}+\overbar{x}_{\trunc}) - \exx_{\ddist}x| = |\xhat - \exx_{\ddist}(x-\overbar{x}_{\trunc})| \leq \sqrt{\frac{2(\vaa_{\ddist} x + \varepsilon_{k}^{2})\log(\delta^{-1})}{n-k}}
\end{align*}
with probability no less than $1-2\delta$ over the draw of the remaining $n-k$ points. Defining the centered estimator as $\xhat = \xhat^{\prime} + \overbar{x}_{\trunc}$, and taking a union bound over the two ``good events'' on the independent sample subsets, we may thus conclude that
\begin{align}\label{eqn:bound_new_1d_centered}
\prr\left\{ |\xhat - \exx_{\ddist}x| > \varepsilon \right\} \leq 4 \exp\left( \frac{-(n-k)\varepsilon^{2}}{2(\vaa_{\ddist} x + \varepsilon_{k}^{2})} \right)
\end{align}
where probability is over the draw of the full $n$-sized sample. While one takes a hit in terms of the sample size, the variance works to combat sensitivity to the distribution location.

\section{PAC-Bayesian bounds for heavy-tailed data}\label{sec:bounds}

An import and influential paper due to D.~McAllester gave the following theorem as a motivating result. For clarity to the reader, we give a slightly modified version of his result.

\begin{thm}[\citet{mcallester1999a}, Preliminary Theorem 2]\label{thm:mcallester1999_prethm2}
Let $\prior$ be a prior probability distribution over $\HH$, assumed countable, and to be such that $\prior(h) > 0$ for all $h \in \HH$. Consider the pattern recognition task with $\zz = (\xx,y) \in \XX \times \{-1,1\}$, and the classification error $l(h;\zz) = I\{h(\xx) \neq y\}$. Then with probability no less than $1-\delta$, for any choice of $h \in \HH$, we have
\begin{align*}
R(h) \leq \frac{1}{n} \sum_{i=1}^{n} l(h;\zz_{i}) + \sqrt{\frac{\log\left(1/\prior(h)\right) + \log\left(1/\delta\right)}{2n}}
\end{align*}
\end{thm}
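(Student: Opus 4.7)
The plan is to combine Hoeffding's inequality applied to each fixed hypothesis with a prior-weighted union bound over the countable model class $\HH$.

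First, I would fix an arbitrary $h \in \HH$ and observe that since $l(h;\zz_i) \in \{0,1\}$, the empirical risk $\emrisk(h) = \frac{1}{n}\sum_{i=1}^n l(h;\zz_i)$ is an average of i.i.d.\ random variables bounded in $[0,1]$ with mean $R(h)$. Hoeffding's inequality then yields, for every $\epsilon > 0$,
\begin{align*}
\prr\{R(h) - \emrisk(h) > \epsilon\} \leq \exp(-2n\epsilon^{2}).
\end{align*}

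Next, I would distribute the failure probability $\delta$ across $\HH$ using the prior as a weight. Define $\delta_{h} \defeq \delta\,\prior(h)$; since $\prior$ is a probability measure on the countable set $\HH$, we have $\sum_{h \in \HH} \delta_{h} = \delta$. For each $h$, setting $\epsilon_{h} \defeq \sqrt{\log(1/\delta_{h})/(2n)} = \sqrt{(\log(1/\prior(h)) + \log(1/\delta))/(2n)}$ makes the Hoeffding bound for $h$ equal to $\delta_{h}$.

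Finally, a union bound over the countable class gives
\begin{align*}
\prr\{\exists h \in \HH : R(h) - \emrisk(h) > \epsilon_{h}\} \leq \sum_{h \in \HH} \delta_{h} = \delta,
\end{align*}
so with probability at least $1-\delta$, the stated inequality holds simultaneously for every $h \in \HH$. There is no real obstacle here: the assumption that $\prior(h) > 0$ for all $h$ guarantees $\epsilon_{h}$ is finite for every hypothesis, and countability of $\HH$ makes the union bound valid. The only subtlety worth flagging is that the theorem hides the prior-weighting trick inside the $\log(1/\prior(h))$ term, which is conceptually the single-hypothesis precursor to the $\KL(\post;\prior)$ penalty that appears in the PAC-Bayesian results stated earlier in Theorem~\ref{thm:PB_bounded_loss}.
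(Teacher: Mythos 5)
Your proposal is correct and follows essentially the same route as the paper: a pointwise $\exp(-2n\varepsilon^{2})$ tail bound for the Bernoulli losses (the paper derives it via its Chernoff--KL lemma, you invoke Hoeffding, which gives the identical bound here), followed by the same prior-weighted choice $\delta_{h} = \prior(h)\,\delta$ and a union bound over the countable class. No gaps.
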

\begin{proof}
For clean notation, denote the empirical risk as
\begin{align*}
\widehat{R}(h) = \frac{1}{n} \sum_{i=1}^{n} l(h;\zz_{i}), \qquad h \in \HH.
\end{align*}
Using a classical Chernoff bound specialized to the case of Bernoulli observations (Lemma \ref{lem:chernoff_tight}), we have that for any $h \in \HH$, it holds that
\begin{align*}
\prr\left\{ R(h) - \widehat{R}(h) > \varepsilon \right\} \leq \exp\left(-2n\varepsilon^{2}\right).
\end{align*}
Rearranging terms, it follows immediately that with probability no less than $1-\prior(h)\,\delta$, we have
\begin{align*}
R(h) - \widehat{R}(h) \leq \varepsilon^{\ast}(h) \defeq \sqrt{\frac{\log(1/\prior(h))+\log(1/\delta)}{2n}}.
\end{align*}
The desired result follows from a union bound:
\begin{align*}
\prr\left\{ \exists\,h\in\HH \text{ s.t.~} R(h)-\widehat{R}(h) > \varepsilon^{\ast}(h) \right\} & \leq \prr \bigcup_{h \in \HH} \left\{ R(h)-\widehat{R}(h) > \varepsilon^{\ast}(h) \right\}\\
& \leq \sum_{h \in \HH} \prr\left\{ R(h)-\widehat{R}(h) > \varepsilon^{\ast}(h) \right\}\\
& \leq \sum_{h \in \HH}\prior(h)\delta\\
& = \delta.
\end{align*}
The event on the left-hand side of the above inequality is precisely that of the hypothesis, namely the ``bad event'' on which the sample is such that the risk $R(h)$ exceeds the given bound for \textit{some} candidate $h \in \HH$.
\end{proof}
One quick glance at the proof of this theorem shows that the bounded nature of the observations plays a crucial role in deriving excess risk bounds of the above form, as it is used to obtain concentration inequalities for the empirical risk about the true risk. While analogous concentration inequalities hold under slightly weaker assumptions, when considering the potentially heavy-tailed setting, one simply cannot guarantee that empirical risk is tightly concentrated about the true risk, which prevents direct extensions of such theorems. With this in mind, we take a different approach, that does not require the empirical mean to be well-concentrated.

\paragraph{Our motivating pre-theorem}

The basic idea of our approach is very simple: instead of using the sample mean, bound the off-sample risk using a more robust estimator which is easy to compute directly, and which allows risk bounds even under unbounded, potentially heavy-tailed losses. Define a new approximation of the risk by
\begin{align}\label{eqn:defn_risk_approx}
\riskhat_{\trunc}(h) \defeq \frac{s}{n} \sum_{i=1}^{n} \trunc\left(\frac{l(h;\zz_{i})}{s}\right),
\end{align}
for $s>0$. Note that this is just a direct application of the robust estimator defined in (\ref{eqn:defn_est_bernoulli}) to the case of a loss which depends on the choice of candidate $h \in \HH$. As a motivating result, we basically re-prove McAllester's result (Theorem \ref{thm:mcallester1999_prethm2}) under much weaker assumptions on the loss, using the statistical properties of the new risk estimator (\ref{eqn:defn_risk_approx}), rather than relying on classical Chernoff inequalities.

\begin{thm}[Pre-theorem]\label{thm:PBPB_motivation}
Let $\prior$ be a prior probability distribution over $\HH$, assumed countable. Assume that $\prior(h)>0$ for all $h \in \HH$, and that $m_{2}(h) \defeq \exx l(h;\zz)^{2} < \infty$ for all $h \in \HH$. Setting the scale in (\ref{eqn:defn_risk_approx}) to $s_{h}^{2} = n\,m_{2}(h) / 2\log(\delta^{-1})$, then with probability no less than $1-2\delta$, for any choice of $h \in \HH$, we have
\begin{align*}
R(h) \leq \riskhat_{\trunc}(h) + \sqrt{\frac{2m_{2}(h)\left(\log(1/\prior(h)) + \log(1/\delta)\right)}{n}}.
\end{align*}
\end{thm}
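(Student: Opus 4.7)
The plan is to follow exactly the same template as the proof of Theorem~\ref{thm:mcallester1999_prethm2} (McAllester's pre-theorem), but to swap in Proposition~\ref{prop:devdb_pointwise} as the pointwise concentration inequality in place of the classical Chernoff bound. Proposition~\ref{prop:devdb_pointwise} is designed precisely for this role: its only requirement is a finite second moment of the underlying random variable, which here is just $m_2(h)<\infty$, so the boundedness assumption that drives McAllester's argument is no longer needed.

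Concretely, the first step is to allocate the confidence budget across $\HH$ by setting $\delta_h \defeq \prior(h)\,\delta$ for each $h \in \HH$. Since $\HH$ is countable and $\prior$ is a probability measure, $\sum_{h \in \HH} \delta_h = \delta$. Next, for each fixed $h$, I apply Proposition~\ref{prop:devdb_pointwise} to the real-valued random variable $l(h;\zz)$ (with independent copies $l(h;\zz_1),\ldots,l(h;\zz_n)$) at confidence $\delta_h$, noting that the stated scale
\begin{align*}
s_h^2 = \frac{n\,m_2(h)}{2\log(\delta_h^{-1})} = \frac{n\,m_2(h)}{2\bigl(\log(1/\prior(h)) + \log(1/\delta)\bigr)}
\end{align*}
is exactly the scale required by the proposition at this confidence level. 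This yields a ``good event'' $A_h$, of probability at least $1-2\delta_h$, on which
\begin{align*}
R(h) - \riskhat_{\trunc}(h) \leq \sqrt{\frac{2\,m_2(h)\log(\delta_h^{-1})}{n}} = \sqrt{\frac{2\,m_2(h)\bigl(\log(1/\prior(h)) + \log(1/\delta)\bigr)}{n}}.
\end{align*}

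The final step is the union bound. Let $B \defeq \bigcup_{h \in \HH} A_h^c$ be the ``bad event'' on which the claimed inequality fails for at least one $h$. Countable subadditivity gives $\prr(B) \leq \sum_h \prr(A_h^c) \leq \sum_h 2\delta_h = 2\delta$, so on the complement $B^c$ (probability $\geq 1-2\delta$) the inequality holds simultaneously for every $h \in \HH$, which is the statement of the theorem.

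There is no real obstacle here: the proof is a routine ``prior-weighted'' union bound. The only point that requires any thought is the allocation $\delta_h = \prior(h)\delta$ (standard in PAC-Bayes-style arguments) together with the observation that Proposition~\ref{prop:devdb_pointwise}, unlike Chernoff, lets us pay for the heavier tails by paying through $m_2(h)$ rather than through a boundedness constant; this is precisely what makes the pointwise step go through and lets the excess risk bound survive into the heavy-tailed regime.
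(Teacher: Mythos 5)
Your proposal is correct and is essentially the paper's own argument: apply Proposition \ref{prop:devdb_pointwise} pointwise to $l(h;\zz)$ with the confidence level replaced by $\prior(h)\delta$, then run the prior-weighted union bound exactly as in the proof of Theorem \ref{thm:mcallester1999_prethm2} to get total failure probability $\sum_{h}2\prior(h)\delta = 2\delta$. The only (harmless) divergence is that you take the scale to be $s_{h}^{2} = n\,m_{2}(h)/\bigl(2\log(1/(\prior(h)\delta))\bigr)$ while the theorem statement writes $s_{h}^{2} = n\,m_{2}(h)/(2\log(1/\delta))$; since the paper's proof likewise substitutes $\delta \mapsto \prior(h)\delta$ into Proposition \ref{prop:devdb_pointwise} (whose derivation optimizes $s$ at the working confidence level), your reading is the self-consistent one.
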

\begin{proof}
We start by making use of the pointwise deviation bound given in Proposition \ref{prop:devdb_pointwise}, which tells us that with $(1-2\delta)$ high probability
\begin{align*}
R(h) \leq \frac{s}{n}\sum_{i=1}^{n} \trunc\left(\frac{l(h;\zz_{i})}{s}\right) + \sqrt{\frac{2m_{2}(h)\log(\delta^{-1})}{n}}
\end{align*}
for any pre-fixed $h \in \HH$. Replacing $\delta$ with $\prior(h)\delta$ gives the key error level
\begin{align*}
\varepsilon^{\ast}(h) \defeq \sqrt{\frac{2m_{2}(h)\left(\log(1/\prior(h)) + \log(1/\delta)\right)}{n}},
\end{align*}
and using the union bound argument in the proof of Theorem \ref{thm:mcallester1999_prethm2}, we have
\begin{align*}
\prr\left\{ \exists\,h \in \HH \text{ s.t.~} R(h) - \riskhat_{\trunc}(h) > \varepsilon^{\ast}(h) \right\} \leq 2\delta.
\end{align*}
\end{proof}
\begin{rmk}
We note that all quantities on the right-hand side of Theorem \ref{thm:PBPB_motivation} are easily computed based on the sample, except for the second moment $m_{2}$, which in practice must be replaced with an empirical estimate. With an empirical estimate of $m_{2}$ in place, the upper bound can easily be used to derive a learning algorithm.
\end{rmk}

\paragraph{Uncountable model case}

Next we extend the previous motivating theorem to a more general result on a potentially uncountable $\HH$, using stochastic learning algorithms, as has become standard in the PAC-Bayes literature. We need a few technical conditions, listed below:
\begin{enumerate}
\item Bounds on lower-order moments. For all $h \in \HH$, we require $\exx_{\ddist}l(h;\zz)^{2} \leq \bdmnt < \infty$, $\exx_{\ddist}l(h;\zz)^{3} \leq \bdthird < \infty$.
\item Bounds on the risk. For all $h \in \HH$, we require $\risk(h) \leq \sqrt{n\bdmnt/(4\log(\delta^{-1}))}$.
\item Large enough confidence. We require $\delta \leq \exp(-1/9) \approx 0.89$.
\end{enumerate}
These conditions are quite reasonable, and easily realized under heavy-tailed data, with just lower-order moment assumptions on $\ddist$ and say a compact class $\HH$. The new terms that appear in our bounds that do no appear in previous works are $\emgrisk_{\post,\trunc} \defeq \exx_{\post}\riskhat_{\trunc}$ and $\prior_{n}^{\ast}(\HH) = \exx_{\prior}\exp(\sqrt{n}(\risk - \riskhat_{\trunc}))/\exx_{\prior}\exp(\risk - \riskhat_{\trunc})$. The former is the expectation of the proposed robust estimator with respect to posterior $\post$, and the latter is a term that depends directly on the quality of the prior $\prior$.

\begin{thm}\label{thm:PBPB_uncountable}
Let $\prior$ be a prior distribution on model $\HH$. Let the three assumptions listed above hold. Setting the scale in (\ref{eqn:defn_risk_approx}) to $s^{2} = n\,\bdmnt / 2\log(\delta^{-1})$, then with probability no greater than $1-\delta$ over the random draw of the sample, it holds that
\begin{align*}
\grisk_{\post} \leq \emgrisk_{\post,\trunc} + \frac{1}{\sqrt{n}}\left( \KL(\post;\prior) + \frac{\log(8\pi \bdmnt \delta^{-2})}{2} + \bdmnt + \prior_{n}^{\ast}(\HH) - 1 \right) + O\left(\frac{1}{n}\right)
\end{align*}
for any choice of probability distribution $\post$ on $\HH$, since $\grisk_{\post} < \infty$ by assumption.
\end{thm}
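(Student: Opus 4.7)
The strategy is to lift the pointwise concentration of Proposition \ref{prop:devdb_pointwise} (used for countable $\HH$ in Theorem \ref{thm:PBPB_motivation}) to an arbitrary posterior $\post$ on a general $\HH$ via the standard PAC-Bayes change-of-measure technology: the Donsker--Varadhan variational lower bound on KL divergence, in the same spirit as Lemma \ref{lem:cat17type_KLbound}, but now applied across the hypothesis class rather than across the Bernoulli noise. Applied to the data-dependent function $\sqrt{n}(\risk - \riskhat_{\trunc})$, this yields deterministically, uniformly in $\post$,
\[
\grisk_{\post} - \emgrisk_{\post,\trunc} \leq \frac{1}{\sqrt{n}}\left[\KL(\post;\prior) + \log \exx_{\prior} e^{\sqrt{n}(\risk - \riskhat_{\trunc})}\right].
\]
The $\KL(\post;\prior)/\sqrt{n}$ contribution to the conclusion is already visible; everything else must come from a high-probability bound on the log-MGF term.

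Next I would use the definition of $\prior_{n}^{\ast}(\HH)$ as a ratio to factor out the $\sqrt{n}$ scaling,
\[
\exx_{\prior} e^{\sqrt{n}(\risk - \riskhat_{\trunc})} = \prior_{n}^{\ast}(\HH)\cdot \exx_{\prior} e^{\risk - \riskhat_{\trunc}},
\]
and apply $\log x \leq x - 1$ to the first factor: this both extracts the $\prior_{n}^{\ast}(\HH) - 1$ summand of the conclusion and reduces the problem to bounding the much milder $\lambda = 1$ MGF. Markov's inequality in the form $\prr\{X \geq \exx X/\delta\} \leq \delta$, applied to the nonnegative random variable $X = \exx_{\prior} e^{\risk - \riskhat_{\trunc}}$ and combined with Fubini, yields with probability at least $1-\delta$ the bound $\log \exx_{\prior} e^{\risk - \riskhat_{\trunc}} \leq \log \exx_{\prior} \exx\, e^{\risk - \riskhat_{\trunc}} + \log(\delta^{-1})$, where the outer $\exx$ is over the $n$-sample.

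The technical heart is then a pointwise-in-$h$ bound on the sample-expected MGF, which factorizes by independence as $\exx\, e^{\risk(h) - \riskhat_{\trunc}(h)} = e^{\risk(h)} \left(\exx_{\ddist} e^{-(s/n)\trunc(l(h;\zz)/s)}\right)^n$. I would expand $e^{-(s/n)\trunc(\cdot)}$ to second order in the small quantity $(s/n)\trunc(l/s)$, use (\ref{eqn:trunc_keyprop}) together with $|\trunc(u)| \leq |u|$ to identify the leading first- and second-moment contributions $\risk(h)$ and $m_2(h) \leq \bdmnt$, and control the cubic remainder via $m_3(h) \leq \bdthird$. Assumption (ii) keeps the prefactor $e^{\risk(h)}$ from dominating the product, assumption (iii) on $\delta$ keeps the Taylor residual manageable, and assumption (i) supplies the moment bounds. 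With the scale $s^2 = n\bdmnt/(2\log(\delta^{-1}))$, collecting terms and completing the square in the quadratic-in-scale expression for the MGF bound (an operation akin to a Gaussian normalization, which is where the $\sqrt{2\pi}$ factor inside $\log(8\pi\bdmnt\delta^{-2})/2$ enters) delivers the remaining summands $\log(8\pi\bdmnt\delta^{-2})/2 + \bdmnt + O(1/\sqrt{n})$. Dividing the assembled inequality by $\sqrt{n}$ yields the stated bound.

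The hard part will be precisely this pointwise MGF step. Unlike the clean $\lambda = 1$ application of (\ref{eqn:trunc_keyprop}) in the proof of Proposition \ref{prop:devdb_pointwise}, here the per-sample exponential involves $\trunc$ scaled by the small factor $s/n$ rather than by unity, so (\ref{eqn:trunc_keyprop}) cannot be invoked directly and one must pass through a careful Taylor expansion with the third-moment hypothesis controlling the remainder; chasing the exact constants in $\log(8\pi\bdmnt\delta^{-2})/2 + \bdmnt$ and aligning them with the $\log(\delta^{-1})$ generated by Markov hinges on the precise choice $s^2 = n\bdmnt/(2\log(\delta^{-1}))$.
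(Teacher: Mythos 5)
Your opening moves are correct and in fact coincide with the paper's, just derived differently: the paper obtains its ``modified change of measure'' (\ref{eqn:KL_modified_change_of_measure}) by pairing the posterior with the non-normalized measure $\prior_{n}^{\ast}$ and using $\KL(\post;\prior_{n}^{\ast}) \geq 1 - \prior_{n}^{\ast}(\HH)$, whereas you apply the standard change of measure to $\varphi = \sqrt{n}(\risk-\riskhat_{\trunc})$, factor $\exx_{\prior}e^{\sqrt{n}(\risk-\riskhat_{\trunc})} = \prior_{n}^{\ast}(\HH)\,\exx_{\prior}e^{\risk-\riskhat_{\trunc}}$, and use $\log x \leq x-1$; the two inequalities are identical, and your derivation is arguably the cleaner one. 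The Markov-plus-Fubini step is also the paper's. So up to the reduction ``bound $\log\exx_{\prior}\exx_{n}\exp(\risk-\riskhat_{\trunc})$'' you have the proof.

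The gap is in how you propose to finish, and in your account of where the constants come from. In the paper the deviation is split as $X = X_{(1)} + X_{(2)}$ with $X_{(1)} \defeq \risk - \exx_{n}\riskhat_{\trunc}$ (a deterministic bias, bounded by $O(\sqrt{\log(\delta^{-1})/n})$ using the explicit piecewise form of $\trunc$, a Chebyshev tail bound which needs assumption (ii) to ensure $s\sqrt{2} > \risk$, H\"{o}lder, and the third-moment bound $\bdthird$) and $X_{(2)} \defeq \exx_{n}\riskhat_{\trunc} - \riskhat_{\trunc}$ (a sum of bounded terms, handled by Hoeffding and then converted into an MGF bound by integrating the tail). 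The terms $\tfrac{1}{2}\log(8\pi\bdmnt) + \bdmnt$ arise precisely from that tail-integration step, $\exx_{n}e^{X_{(2)}} \leq 2\sqrt{2\pi}\sigma e^{\sigma^{2}/2}$ with $\sigma^{2} = \bdmnt/(9\log(\delta^{-1}))$, and assumption (iii) enters only to absorb $1/(9\log(\delta^{-1})) \leq 1$ — not, as you suggest, to control a Taylor residual. Because $s \propto \sqrt{n}$, the fluctuation $X_{(2)}$ does not vanish with $n$, which is exactly why these contributions are $O(1)$ rather than $o(1)$. Your alternative plan — factorize over the sample and Taylor-expand $e^{-(s/n)\trunc(l/s)}$ — can be made rigorous (e.g.\ $e^{-u} \leq 1 - u + u^{2}/2$ for $u \geq 0$ together with $|\trunc(v)| \leq |v|$ gives $\exx_{n}e^{X} \leq \exp(X_{(1)} + \bdmnt/(2n))$), but then no ``completing the square'' produces a $\sqrt{2\pi}$: you would simply get a bound \emph{without} the $\tfrac{1}{2}\log(8\pi\bdmnt)+\bdmnt$ terms, and the entire remaining burden is the bias bound on $X_{(1)}$, which your sketch never actually addresses — that, not the exponential's Taylor remainder, is where assumptions (i)--(ii) and the third moment do their work. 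As written, the claim that collecting terms ``delivers'' $\tfrac{1}{2}\log(8\pi\bdmnt\delta^{-2}) + \bdmnt$ does not correspond to a computation your route would perform, so the key step of the proof is still missing.
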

\begin{rmk}
As is evident from the statement of Theorem \ref{thm:PBPB_uncountable}, the convergence rate is clear for all terms but $\prior_{n}^{\ast}(\HH)/\sqrt{n}$. In our proof, we use a modified version of the elegant and now-standard strategy formulated by \citet{begin2016a}. A glance at the proof shows that under this strategy, there is essentially no way to avoid dependence on $\prior_{n}^{\ast}(\HH)$. Since the random variable $\risk-\riskhat_{\trunc}$ is bounded over the random draw of the sample and $h \sim \prior$, the bounds still hold and are non-trivial. That said, $\prior_{n}^{\ast}(\HH)$ may indeed increase as $n \to \infty$, potentially spoiling the $\sqrt{n}$ rate, and even consistency in the worst case. Clearly $\prior_{n}^{\ast}(\HH)$ presents no troubles if $\risk - \riskhat_{\trunc} \leq 0$ on a high-probability event, but note that this essentially amounts to asking for a prior that on average realizes bounds that are better than we can guarantee for \emph{any} posterior though the above analysis. Such a prior may indeed exist, but if it were known, then that would eliminate the need for doing any learning at all. If the deviations $\risk-\riskhat_{\trunc}$ are truly sub-Gaussian \citep{boucheron2013a}, then the $\sqrt{n}$ rate can be easily obtained. However, impossibility results from \citet{devroye2016a} suggest that under just a few finite moment assumptions, such an estimator cannot be constructed. As such, here we see a clear limitation of the established PAC-Bayes analytical framework under potentially heavy-tailed data. Since the change of measures step in the proof is fundamental to the basic argument, it appears that concessions will have to be made, either in the form of slower rates, deviations larger than the relative entropy, or weaker dependence on $1/\delta$.
\end{rmk}
\begin{rmk}
Note that while in its tightest form, the above bound requires knowledge of $\exx_{\ddist}l(h;\zz)^{2}$, we may set $s>0$ used to define $\riskhat_{\trunc}$ using any valid upper bound $\bdmnt$, under which the above bound still holds as-is, using known quantities. Furthermore, for reference the content of the $O(1/n)$ term in the above bound takes the form
\begin{align*}
\frac{1}{n}\left(2\sqrt{\bdvar\log(\delta^{-1})} + \frac{\bdthird\log(\delta^{-1})}{3\bdmnt \sqrt{n}} \right)
\end{align*}
where $V$ is an upper bound on the variance $\vaa_{\ddist} l(h;\zz) \leq V < \infty$ over $h \in \HH$.
\end{rmk}
\begin{proof}[Proof of Theorem \ref{thm:PBPB_uncountable}]
To begin, let us recall a useful ``change of measures'' inequality,\footnote{There are other very closely related approaches to this proof. See \citet{tolstikhin2013a,begin2016a} for some recent examples. Furthermore, we note that the key facts used here are also present in \citet{catoni2007a}.} which can be immediately derived from our proof of Theorem \ref{thm:main_PB_identity}. In particular, recall from identity (\ref{eqn:main_PB_identity_linearity}) that given some prior $\prior$ and constructing $\prior^{\ast}$ such that almost everywhere $[\prior]$ one has
\begin{align*}
\left(\frac{d\prior^{\ast}}{d\post}\right)(h) = \frac{\exp(\varphi(h))}{\exx_{\prior}\exp(\varphi)},
\end{align*}
it follows that
\begin{align*}
\KL(\post;\prior^{\ast}) & = \exx_{\post}\left(\log\frac{d\post}{d\prior} + \log\exx_{\prior}\exp(\varphi) - \varphi\right)\\
& = \KL(\post;\prior) + \log\exx_{\prior}\exp(\varphi) - \exx_{\post}\varphi
\end{align*}
whenever $\exx_{\post}\varphi < \infty$. In the case where $\exx_{\post}\varphi = \infty$, upper bounds are of course meaningless. Re-arranging, observe that since $\KL(\post;\prior^{\ast}) \geq 0$, it follows that
\begin{align}\label{eqn:KL_change_of_measure}
\exx_{\post}\varphi \leq \KL(\post;\prior) + \log\exx_{\prior}\exp(\varphi).
\end{align}
This inequality given in (\ref{eqn:KL_change_of_measure}) is deterministic, holds for any choice of $\post$, and is a standard technical tool in deriving PAC-Bayes bounds.

We shall introduce a minor modification to this now-standard strategy in order to make the subsequent results more lucid. Instead of $\prior^{\ast}$ as just characterized above, define $\prior^{\ast}_{n}$ such that almost surely $[\prior]$, we have
\begin{align*}
\left(\frac{d\prior_{n}^{\ast}}{d\post}\right)(h) = g(h) \defeq \frac{\exp(\varphi(h))}{\exx_{\prior}\exp(\varphi/c_{n})},
\end{align*}
where $1 \leq c_{n} < \infty$ is a function of the sample size $n$, which increases monotonically as $c_{n} \uparrow \infty$ when $n \to \infty$ (e.g., setting $c_{n} = \sqrt{n}$). To explicitly construct such a measure, one can define it by $\prior_{n}^{\ast}(A) \defeq  \int_{A} g \, d\prior$, for all $A \subset \AAcal$, where $(\HH,\AAcal)$ is our measurable space of interest. In this paper, we always\footnote{We will only be using $\varphi \propto \risk-\riskhat_{\trunc}$, so this statement holds via $\risk \geq 0$ and $\|\risk_{\trunc}\|_{\infty} < \infty$.} have $\varphi > -\infty$, implying that $\exx_{\prior} \exp(\varphi) > 0$. Also by assumption, since $\risk$ is bounded over $h \in \HH$, we have $\exx_{\prior} \exp(\varphi) < \infty$, which in turn implies
\begin{align*}
0 < \prior_{n}^{\ast}(\HH) = \frac{\exx_{\prior}\exp(\varphi)}{\exx_{\prior}\exp(\varphi/c_{n})} < \infty,
\end{align*}
and so $\prior_{n}^{\ast}$ is a finite measure. Note however that both $\prior_{n}^{\ast}(\HH) > 1$ and $\prior_{n}^{\ast}(\HH) < 1$ are possible, so in general $\prior_{n}^{\ast}$ need not be a probability measure. By construction, we have $\prior_{n}^{\ast} \ll \prior$. Since $\varphi(h) > -\infty$ for all $h \in \HH$, we have that $g > 0$ and thus the measurability of $g$ implies the measurability of $1/g$. Using the chain rule (Lemma \ref{lem:chain_rule}), it follows that for any $A \in \AAcal$,
\begin{align*}
\int_{A} \left(\frac{1}{g}\right) \, d\prior_{n}^{\ast} = \int_{A} \left(\frac{1}{g}\right) \left(g\right) \, d\prior = \prior(A).
\end{align*}
As such, we have $\prior \ll \prior_{n}^{\ast}$, and by the Radon-Nikonym theorem, we may write $1/g = d\prior/d\prior_{n}^{\ast}$ since such a function is unique almost everywhere $[\prior_{n}^{\ast}]$. As long as $\post \ll \prior$, which in turn implies $\post \ll \prior_{n}^{\ast}$, so that with use of the chain rule and Radon-Nikodym, we have
\begin{align*}
\int_{A} \left(\frac{d\post}{d\prior}\right)\left(\frac{1}{g}\right) \, d\prior_{n}^{\ast} = \int_{A} \left(\frac{d\post}{d\prior}\right)\left(\frac{1}{g}\right) g \, d\prior = \post(A) = \int_{A} \left(\frac{d\post}{d\prior_{n}^{\ast}}\right) \, d\prior_{n}^{\ast}.
\end{align*}
Taking the two ends of this string of equalities, by Radon-Nikodym it holds that
\begin{align*}
\frac{d\post}{d\prior} \frac{d\prior}{\prior_{n}^{\ast}} = \frac{d\post}{d\prior_{n}^{\ast}}
\end{align*}
a.e.~$[\prior_{n}^{\ast}]$, and thus a.e.~$[\post]$ as well. Following the argument of Theorem \ref{thm:main_PB_identity}, we have that
\begin{align*}
\KL(\post;\prior_{n}^{\ast}) = \KL(\post;\prior) + \log\exx_{\prior}\exp(\varphi/c_{n}) - \exx_{\post}\varphi.
\end{align*}

The tradeoff for using $\prior_{n}^{\ast}$ which need not be a probability comes in deriving a lower bound on $\KL(\prior;\prior_{n}^{\ast})$. In Lemma \ref{lem:KL_nonneg} we showed how the relative entropy between probability measures is non-negative. Non-negativity does not necessarily hold for general measures, but analogous lower bounds can be readily derived for our special case as
\begin{align*}
\KL(\post;\prior_{n}^{\ast}) = \exx_{\post}\log\frac{d\post}{d\prior_{n}^{\ast}} = \exx_{\prior_{n}^{\ast}}\frac{d\post}{d\prior_{n}^{\ast}}\log\frac{d\post}{d\prior_{n}^{\ast}} \geq \exx_{\prior_{n}^{\ast}}\left( \frac{d\post}{d\prior_{n}^{\ast}} - 1 \right) = 1 - \prior_{n}^{\ast}(\HH),
\end{align*}
where the last inequality uses the fact that $\post$ is a probability and $\post(A) = \int_{A}(d\post/d\prior_{n}^{\ast}) \, d\prior_{n}^{\ast}$ for all $A \in \AAcal$. Taking this with our decomposition of $\KL(\post;\prior_{n}^{\ast})$, we have
\begin{align}\label{eqn:KL_modified_change_of_measure}
\exx_{\post}\varphi \leq \KL(\post;\prior) + \log\exx_{\prior}\exp\left(\varphi/c_{n}\right) - 1 + \prior_{n}^{\ast}(\HH),
\end{align}
which amounts to a revised inequality based on change of measures, analogous to (\ref{eqn:KL_change_of_measure}).

To keep notation clean, write
\begin{align*}
X(h) & \defeq R(h) - \frac{s}{n} \sum_{i=1}^{n} \trunc\left(\frac{l(h;\zz_{i})}{s}\right) = R(h) - \riskhat_{\trunc}(h)\\
m_{2}(h) & \defeq \exx_{\ddist}l(h;\zz)^{2}\\
v(h) & \defeq \exx_{\ddist}(l(h;\zz)-\risk(h))^{2}
\end{align*}
Noting that $X(h)$ is random with dependence on the sample, via Markov's inequality we have
\begin{align}\label{eqn:PBPB_uncountable_markov}
\exx_{\prior} e^{X} \leq \frac{\exx_{n} \exx_{\prior} e^{X}}{\delta},
\end{align}
with probability no less than $1-\delta$. Here probability and $\exx_{n}$ are with respect to the sample. Since $\riskhat_{\trunc}$ is bounded, as long as $\exx_{\post}R < \infty$, we have $\exx_{\post}X < \infty$, which lets us use the change of measures inequality in a meaningful way. Now for $c_{n} > 0$, observe that we have
\begin{align*}
c_{n}\exx_{\post}X = \exx_{\post}c_{n}X & \leq \KL(\post;\prior) + \log\exx_{\prior}\exp\left(X\right) - 1 + \prior_{n}^{\ast}(\HH)\\
&\leq \KL(\post;\prior) + \log(\delta^{-1}) + \log \exx_{n}\exx_{\prior}\exp\left(X\right) - 1 + \prior_{n}^{\ast}(\HH)\\
& = \KL(\post;\prior) + \log(\delta^{-1}) + \log \exx_{\prior}\exx_{n}\exp\left(X\right) - 1 + \prior_{n}^{\ast}(\HH)
\end{align*}
with probability no less than $1-\delta$. The first inequality follows from modified change of measures (\ref{eqn:KL_modified_change_of_measure}), the second inequality follows from (\ref{eqn:PBPB_uncountable_markov}), and the final interchange of integration operations is valid using Fubini's theorem \citep{ash2000a}. Note that the $1-\delta$ ``good event'' depends only on $\prior$ (fixed in advance) and not $\post$. Thus, the above inequality holds on the good event, uniformly in $\post$.

It remains to bound $\exx_{n}\exp(cX)$, for an arbitrary constant $c>0$ (here we will have $c=1$). Start by breaking up the one-sided deviations as
\begin{align*}
X = \risk - \riskhat_{\trunc} = \left(\risk - \exx_{n}\riskhat_{\trunc}\right) + \left(\exx_{n}\riskhat_{\trunc} - \riskhat_{\trunc}\right),
\end{align*}
writing $X_{(1)} \defeq \risk - \exx_{n}\riskhat_{\trunc}$ and $X_{(2)} \defeq \exx_{n}\riskhat_{\trunc} - \riskhat_{\trunc}$ for convenience. We will take the terms $X_{(1)}$ and $X_{(2)}$ one at a time. First, note that the function $\trunc$ can be written
\begin{align}\label{eqn:trunc_non_piecewise}
\trunc(u) = \left(u - \frac{u^{3}}{6}\right)\left( I\{u \leq \sqrt{2}\} - I\{u < -\sqrt{2}\} \right) + \frac{2\sqrt{2}}{3}\left(1 - I\{u \leq \sqrt{2}\} - I\{u < -\sqrt{2}\}\right).
\end{align}
Again for notational simplicity, write $L = l(h;\zz)$ and $L_{i} = l(h;\zz_{i})$, $i \in [n]$, where $h \in \HH$ is arbitrary.  Write $\EE_{i}^{+} \defeq \left\{L_{i} \leq s\sqrt{2}\right\}$ and $\EE_{i}^{-} \defeq \left\{L_{i} < -s\sqrt{2}\right\}$. We are assuming non-negative losses, so that $L \geq 0$. This means that $I(\EE_{i}^{-}) = 0$ and $\prr\EE_{i}^{-} = 0$. We use this, as well as $1-\prr\EE_{i}^{+} \geq 0$, in addition to (\ref{eqn:trunc_non_piecewise}) in order to bound the expectation of our estimator $\riskhat_{\trunc}$ from below, as follows.
\begin{align*}
\exx_{n}\riskhat_{\trunc} & = \frac{s}{n} \sum_{i=1}^{n} \exx_{\ddist} \trunc\left(\frac{L_{i}}{s}\right)\\
& = \frac{s}{n} \sum_{i=1}^{n} \left[ \exx_{\ddist} \left( \frac{L_{i}}{s} - \frac{L_{i}^{3}}{6s^{3}} \right)\left( I(\EE_{i}^{+}) - I(\EE_{i}^{-}) \right) + \frac{2\sqrt{2}}{3}\left( 1 - \prr\EE_{i}^{+} - \prr\EE_{i}^{-} \right) \right]\\
& \geq \frac{s}{n} \sum_{i=1}^{n} \exx_{\ddist} \left( \frac{L_{i}}{s} - \frac{L_{i}^{3}}{6s^{3}} \right)I(\EE_{i}^{+})\\
& = \exx_{\ddist}L\,I\{ L \leq s\sqrt{2}\} - \frac{1}{6s^{2}}\exx_{\ddist}L^{3}I\{L \leq s\sqrt{2}\}\\
& = \risk - \exx_{\ddist} L\,I\{X > s\sqrt{2}\} - \frac{1}{6s^{2}}\exx_{\ddist}L^{3}I\{L \leq s\sqrt{2}\}.
\end{align*}
By assumption, we have $\exx_{\ddist} L^{3} I\{L \leq s\sqrt{2}\} \leq \exx L^{3} \leq \bdthird < \infty$, implying that this lower bound is non-trivial. Next we obtain a one-sided bound on the tails of the loss by
\begin{align*}
\prr\left\{ L > s\sqrt{2} \right\} & = \prr\left\{ L-\risk > s\sqrt{2}-\risk \right\}\\
& \leq \prr\left\{ |L-\risk| > s\sqrt{2}-\risk \right\}\\
& \leq \frac{\exx_{\ddist}|L-\risk|^{2}}{\left(s\sqrt{2}-\risk\right)^{2}}.
\end{align*}
Note that the first inequality makes use of $s\sqrt{2} > \risk$, which is implied by the bounds assumed on $\risk$, namely that $1/2 \geq \risk\sqrt{\log(\delta^{-1})/(n\bdmnt)}$.

Returning to the lower bound on $\riskhat_{\trunc}$, using H\"{o}lder's inequality in conjunction with the tail bound we just obtained, we get an upper bound in the form of
\begin{align*}
\exx_{\ddist}L\,I\{ L > s\sqrt{2}\} & = \exx_{\ddist}|L\,I\{ L > s\sqrt{2}\}|\\
& \leq \sqrt{\exx_{\ddist}L^{2}\prr\{L > s\sqrt{2}\}}\\
& \leq \sqrt{\frac{\exx_{\ddist}L^{2}\exx_{\ddist}|L-\risk|^{2}}{\left(s\sqrt{2}-\risk\right)^{2}}}.
\end{align*}
This means we can now say
\begin{align*}
\exx_{n} \riskhat_{\trunc} \geq \risk - \sqrt{\frac{\exx_{\ddist}L^{2}\exx_{\ddist}|L-\risk|^{2}}{\left(s\sqrt{2}-\risk\right)^{2}}} - \frac{1}{6s^{2}}\exx_{\ddist}L^{3}I\{L \leq s\sqrt{2}\},
\end{align*}
which re-arranged and written more succinctly gives us
\begin{align*}
X_{(1)}(h) & \leq \sqrt{\frac{m_{2}(h)v(h)}{\left(s\sqrt{2}-\risk\right)^{2}}} + \frac{\exx_{\ddist}|l(h;\zz)|^{3}}{6s^{2}}\\
& \leq \sqrt{\frac{\bdmnt \bdvar}{\left(s\sqrt{2}-\risk\right)^{2}}} + \frac{\bdthird}{6s^{2}}\\
& = \sqrt{\frac{\bdvar\log(\delta^{-1})}{n\left(1-R\sqrt{\log(\delta^{-1})/(n\bdmnt)}\right)^{2}}} + \frac{\bdthird\log(\delta^{-1})}{3\bdmnt n}\\
& \leq B_{(1)}\\
& \defeq 2\sqrt{\frac{\bdvar\log(\delta^{-1})}{n}} + \frac{\bdthird\log(\delta^{-1})}{3\bdmnt n}
\end{align*}
as desired. The final inequality uses the assumed bound on $\risk$. Note that this is a deterministic bound, in that it is free of both the choice of $h$ (i.e., random draw from $\prior$ or $\post$) and the sample, which we are integrating over.

Next, we look at the remaining deviations $X_{(2)} = \exx_{n}\riskhat_{\trunc} - \riskhat_{\trunc}$. Writing $Y_{i} \defeq (s/n)\trunc(L_{i}/s)$, we have $X_{(2)} = \sum_{i=1}^{n}(\exx Y_{i} - Y_{i})$. Since $0 \leq \trunc(u) \leq 2\sqrt{2}/3$ for $u \geq 0$, and $L \geq 0$, we have that $0 \leq Y_{i} \leq 2\sqrt{2}s/(3n)$. It follows from Hoeffding's inequality that for all $\epsilon > 0$, we have
\begin{align}\nonumber
\prr\left\{ X_{(2)} > \epsilon \right\} & \leq \exp\left(\frac{-2\epsilon^{2}}{n(2\sqrt{2}s/(3n))^{2}}\right)\\
\label{eqn:partial_bound_hoeffding}
& = \exp\left(\frac{-9\epsilon^{2}\log(\delta^{-1})}{2\bdmnt}\right).
\end{align}
Note that this bound does not depend on the setting of $\delta \in (0,1)$, which is fixed in advance. Also note that while we are dealing with the sum of bounded, independent random variables, the scaling factor $s \propto \sqrt{n}$ makes it such that these deviations converge to some potentially non-zero constant in the $n \to \infty$ limit, which is why $n$ does not appear in the exponential on the right-hand side.

In any case, we can still readily use these sub-Gaussian tail bounds to control the expectation. Using the classic identity relating the expectation to the tails of a distribution,
\begin{align}
\nonumber
\exx_{n} \exp\left(cX_{(2)}\right) & = \int_{0}^{\infty} \prr\left\{ \exp\left(cX_{(2)}\right) > \epsilon \right\} \, d\epsilon\\
\label{eqn:PBPB_uncountable_link}
& = \int_{-\infty}^{\infty} \prr\left\{ \exp\left(cX_{(2)}\right) > \exp(\epsilon) \right\} \exp(\epsilon) \, d\epsilon
\end{align}
where the second equality follows using integration by substitution. The right-hand side of (\ref{eqn:PBPB_uncountable_link}) is readily controlled as follows. Using (\ref{eqn:partial_bound_hoeffding}) above, we have
\begin{align*}
\prr\left\{ \exp\left(cX_{(2)}\right) > \exp(\epsilon) \right\} = \prr\left\{ X_{(2)} > \epsilon/c \right\} \leq \exp\left(\frac{-\epsilon^{2}}{2\sigma^{2}}\right)
\end{align*}
where we have set $\sigma^{2} \defeq c^{2}\bdmnt/(9\log(\delta^{-1}))$. The key bound of interest can be compactly written as
\begin{align*}
\exx_{n} \exp\left(cX_{(2)}\right) & \leq 2\int_{-\infty}^{\infty} \exp\left( -\frac{\epsilon^{2}}{2\sigma^{2}} + \epsilon \right) \, d\epsilon\\
& = 2\int_{-\infty}^{\infty} \exp\left( -\frac{1}{2\sigma^{2}}\left(\epsilon-\sigma^{2}\right)^{2} + \frac{\sigma^{2}}{2} \right) \, d\epsilon\\
& = 2\sqrt{2\pi}\sigma \exp\left(\frac{\sigma^{2}}{2}\right) \int_{-\infty}^{\infty} \frac{1}{\sqrt{2\pi}\sigma} \exp\left(-\frac{1}{2\sigma^{2}}\left(\epsilon-\sigma^{2}\right)^{2}\right) \, d\epsilon\\
& = 2\sqrt{2\pi}\sigma \exp\left(\frac{\sigma^{2}}{2}\right).
\end{align*}
Note that the first equality uses the usual ``complete the square'' identity, and the rest follows from basic properties of the Gaussian integral. Filling in the definition of $\sigma$, we have
\begin{align*}
\exx_{n} \exp\left(cX_{(2)}\right) \leq 2\sqrt{2\pi} \left(c\sqrt{\frac{\bdmnt}{9\log(\delta^{-1})}}\right)\exp\left( \frac{c^{2}\bdmnt}{9\log(\delta^{-1})} \right).
\end{align*}
The right-hand side of this inequality is free of the choice of $h \in \HH$, and thus taking expectation with respect to $\prior$ yields the same bound, i.e., the same bound holds for $\exx_{\prior}\exx_{n}\exp(cX_{(2)})$. Taking the log of this upper bound, we thus may conclude that
\begin{align*}
\log \exx_{\prior}\exx_{n} \exp(cX_{(2)}) & \leq \frac{1}{2}\left[\log(8\pi \bdmnt c^{2})-\log(9\log(\delta^{-1}))\right] + \frac{c^{2}\bdmnt}{9\log(\delta^{-1})}\\
& \leq \frac{1}{2}\log(8\pi \bdmnt c^{2}) + c^{2}\bdmnt
\end{align*}
on an event of probability no less than $1-\delta$. The latter inequality uses $\delta \leq \exp(-1/9)$. For the result of interest here, we can let $c=1$. 

Finally, going back to the bound on $c_{n}\exx_{\post}X$, we can control the key term by
\begin{align*}
\log \exx_{\prior}\exx_{n}\exp\left(X\right) & = \log \exx_{\prior}\exx_{n}\exp\left(X_{(1)}+X_{(2)}\right)\\
& = \log \exx_{\prior}\left[\exp\left(X_{(1)}\right)\exx_{n}\exp\left(X_{(2)}\right)\right]\\
& \leq B_{(1)} + \log\exx_{\prior}\exx_{n}\exp\left(X_{(2)}\right)\\
& \leq B_{(1)} + \frac{1}{2}\log(8\pi \bdmnt c^{2}) + c^{2}\bdmnt.
\end{align*}
Setting $c_{n} = \sqrt{n}$, we have
\begin{align*}
\sqrt{n} \exx_{\post} X & \leq \KL(\post;\prior) + \log(\delta^{-1}) + \log \exx_{\prior}\exx_{n}\exp\left(X\right) - 1 + \prior_{n}^{\ast}(\HH)\\
& \leq \KL(\post;\prior) + \log(\delta^{-1}) + 2\sqrt{\frac{\bdvar\log(\delta^{-1})}{n}} + \frac{\bdthird\log(\delta^{-1})}{3\bdmnt n} + \frac{\log(8\pi \bdmnt)}{2} + \bdmnt - 1 + \prior_{n}^{\ast}(\HH).
\end{align*}
Dividing both sides by $\sqrt{n}$ yields the desired result.
\end{proof}

As a principled approach to deriving stochastic learning algorithms, one naturally considers the choice of posterior $\post$ in Theorem \ref{thm:PBPB_uncountable} that minimizes the upper bound. This is typically referred to as the optimal Gibbs posterior \citep{germain2016a}, and takes a form which is easily characterized, as we prove in the following proposition.
\begin{prop}[Robust optimal Gibbs posterior]\label{prop:optimal_gibbs_robust}
The upper bound of Theorem \ref{thm:PBPB_uncountable} is optimized by a data-dependent posterior distribution $\gibbs$, defined in terms of its density function with respect to the prior $\prior$ as
\begin{align*}
\left(\frac{d\gibbs}{d\prior}\right)\left(h\right) = \frac{\exp\left(-\sqrt{n}\riskhat_{\trunc}(h)\right)}{\exx_{\prior}\exp\left(-\sqrt{n}\riskhat_{\trunc}\right)}.
\end{align*}
Furthermore, the risk bound under the optimal Gibbs posterior takes the form
\begin{align*}
\grisk_{\gibbs} \leq \frac{1}{\sqrt{n}}\left( \log\exx_{\prior}\exp\left(\sqrt{n}\riskhat_{\trunc}\right) + \frac{\log(8\pi \bdmnt\delta^{-1})}{2} + \bdmnt + \prior_{n}^{\ast}(\HH) - 1 \right) + O\left(\frac{1}{n}\right)
\end{align*}
with probability no less than $1-\delta$ over the draw of the sample.
\end{prop}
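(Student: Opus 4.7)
The key observation is that in the bound of Theorem \ref{thm:PBPB_uncountable}, the only $\post$-dependent terms are $\emgrisk_{\post,\trunc}=\exx_{\post}\riskhat_{\trunc}$ and $\KL(\post;\prior)$. In particular, the prior-quality term $\prior_{n}^{\ast}(\HH)$, as constructed in the proof of Theorem \ref{thm:PBPB_uncountable}, involves only $\prior$ and the sample-random function $\risk-\riskhat_{\trunc}$; the moment bound $\bdmnt$, the $\log(8\pi\bdmnt\delta^{-2})/2$ factor, the constant $-1$, and the $O(1/n)$ residual are all free of $\post$. So minimizing the bound over $\post$ is equivalent to
\begin{equation*}
\inf_{\post \ll \prior}\bigl\{\sqrt{n}\,\exx_{\post}\riskhat_{\trunc} + \KL(\post;\prior)\bigr\},
\end{equation*}
after multiplying the $\post$-dependent part of the bound by $\sqrt{n}$.

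To solve this I would invoke the Donsker--Varadhan (Gibbs) variational identity: for any measurable $\varphi:\HH\to\RR$ with $\exx_{\prior}\exp(-\varphi)\in(0,\infty)$,
\begin{equation*}
\inf_{\post\ll\prior}\bigl\{\exx_{\post}\varphi + \KL(\post;\prior)\bigr\} = -\log\exx_{\prior}\exp(-\varphi),
\end{equation*}
with the unique minimizer (a.e.~$[\prior]$) having density $\exp(-\varphi)/\exx_{\prior}\exp(-\varphi)$ against $\prior$. The in-line verification is one line of KL algebra: setting $\varphi=\sqrt{n}\riskhat_{\trunc}$, writing $Z\defeq\exx_{\prior}\exp(-\sqrt{n}\riskhat_{\trunc})$, and letting $\gibbs$ denote the candidate minimizer, one has $\log(d\gibbs/d\prior)=-\sqrt{n}\riskhat_{\trunc}-\log Z$, so for any $\post\ll\prior$,
\begin{equation*}
\KL(\post;\prior) = \KL(\post;\gibbs) - \sqrt{n}\,\exx_{\post}\riskhat_{\trunc} - \log Z,
\end{equation*}
whence $\sqrt{n}\,\exx_{\post}\riskhat_{\trunc}+\KL(\post;\prior) = \KL(\post;\gibbs)-\log Z \geq -\log Z$, with equality iff $\post=\gibbs$. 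Boundedness of $\riskhat_{\trunc}$, which holds by construction (\ref{eqn:defn_risk_approx}) since $\trunc$ is bounded (even though the loss itself need not be), guarantees $Z\in(0,\infty)$ and $\gibbs\ll\prior$, so the identity applies.

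Finally, I would substitute $\post=\gibbs$ back into the bound of Theorem \ref{thm:PBPB_uncountable}: this replaces $\emgrisk_{\post,\trunc}+(1/\sqrt{n})\KL(\post;\prior)$ by the single term $-(1/\sqrt{n})\log\exx_{\prior}\exp(-\sqrt{n}\riskhat_{\trunc})$, while the remaining $\post$-independent quantities carry through verbatim, yielding the claimed bound on $\grisk_{\gibbs}$ on the same $(1-\delta)$ good event as in Theorem \ref{thm:PBPB_uncountable}. There is no real obstacle beyond the variational identity itself: the sole subtlety is that the Gibbs duality must remain valid in the heavy-tailed regime, which is exactly why the soft-truncation $\trunc$ has been inserted in the first place, as it forces $\riskhat_{\trunc}$ (and consequently $\exp(-\sqrt{n}\riskhat_{\trunc})$) to be uniformly bounded regardless of the tails of $\risk$.
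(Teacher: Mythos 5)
Your proposal is correct and follows essentially the same route as the paper's proof: isolate the only $\post$-dependent terms $\exx_{\post}\riskhat_{\trunc}$ and $\KL(\post;\prior)$, identify the minimizer via the KL (Donsker--Varadhan/Gibbs) decomposition $\sqrt{n}\,\exx_{\post}\riskhat_{\trunc}+\KL(\post;\prior)=\KL(\post;\gibbs)-\log\exx_{\prior}\exp(-\sqrt{n}\riskhat_{\trunc})$, and substitute $\post=\gibbs$ back into Theorem \ref{thm:PBPB_uncountable}. One small remark: your substitution leaves the term $-(1/\sqrt{n})\log\exx_{\prior}\exp(-\sqrt{n}\riskhat_{\trunc})$, which matches the stated bound only after the extra inequality $-\log\exx_{\prior}\exp(-\sqrt{n}\riskhat_{\trunc})\leq\log\exx_{\prior}\exp(\sqrt{n}\riskhat_{\trunc})$; your form is in fact slightly tighter, and the paper itself passes between the two as if they were equal, so this is a cosmetic rather than substantive difference.
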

\begin{proof}[Proof of Proposition \ref{prop:optimal_gibbs_robust}]
To keep the notation clean, write $X = X(h) = -\sqrt{n}\riskhat_{\trunc}(h)$. Similar to the proof of Theorem \ref{thm:main_PB_identity}, we have
\begin{align*}
\KL(\post;\gibbs) & = \exx_{\post}\log\left(\frac{d\post}{d\gibbs}\right)\\
& = \exx_{\post} \log \left(\frac{d\post}{d\prior}\frac{d\prior}{d\gibbs}\right)\\
& = \exx_{\post} \left( \log\frac{d\post}{d\prior} + \log\exx_{\prior}\exp(X) - X \right)\\
& = \KL(\post;\prior) + \log\exx_{\prior}\exp(X) - \exx_{\post} X
\end{align*}
whenever $\exx_{\post} X < \infty$. Using non-negativity of the relative entropy (Lemma \ref{lem:KL_nonneg}), the left-hand side of this chain of equalities is minimized in $\post$ at $\post=\gibbs$. Since $\log\exx_{\prior}\exp(X)$ is free of $\post$, it follows that
\begin{align*}
\gibbs & \in \argmin_{\post} \left( \KL(\post;\prior) + \exx_{\post}(-1)X \right)\\
& = \argmin_{\post} \left( \frac{\KL(\post;\prior)}{\sqrt{n}} + \exx_{\post}\riskhat_{\trunc}(h) \right)\\
& = \argmin_{\post} \left( \frac{\KL(\post;\prior)}{\sqrt{n}} + \exx_{\post}\riskhat_{\trunc}(h) + C \right)
\end{align*}
where $C$ is any term which is constant in $\post$, for example all the terms in the upper bound of Theorem \ref{thm:PBPB_uncountable} besides $\emgrisk_{\post,\trunc} + \KL(\post;\prior)/\sqrt{n}$. This proves the result regarding the form of the new optimal Gibbs posterior.

Evaluating the risk bound under this posterior is straightforward computation. Observe that
\begin{align*}
\KL(\gibbs;\prior) & = \exx_{\gibbs} \log \frac{d\gibbs}{d\prior}\\
& = \exx_{\gibbs}\left( X(h) - \log\exx_{\prior}\exp\left(X\right) \right)\\
& = -\sqrt{n}\exx_{\gibbs}\riskhat_{\trunc} - \log\exx_{\prior} \exp\left(-\sqrt{n}\riskhat_{\trunc}\right)\\
& = \log\exx_{\prior} \exp\left(\sqrt{n}\riskhat_{\trunc}\right) - \sqrt{n}\exx_{\gibbs}\riskhat_{\trunc}.
\end{align*}
Substituting this into the upper bound of Theorem \ref{thm:PBPB_uncountable}, the robust empirical mean estimate terms cancel, and we have
\begin{align*}
\grisk_{\gibbs} & \defeq \exx_{\gibbs}R \leq \frac{1}{\sqrt{n}}\left( \log\exx_{\prior}\exp\left(\sqrt{n}\riskhat_{\trunc}\right) + \frac{\log(8\pi \bdmnt\delta^{-2})}{2} + \bdmnt + \prior_{n}^{\ast}(\HH) - 1 \right) + O\left(\frac{1}{n}\right).
\end{align*}
\end{proof}
\begin{rmk}
The bound in Proposition \ref{prop:optimal_gibbs_robust} achieved by the optimal Gibbs posterior computed based on the data is rather straightforward to interpret. Prior knowledge is reflected explicitly in that a prior $\prior$ which performs better in the sense of smaller $\riskhat_{\trunc}$ leads to a smaller risk bound.
\end{rmk}
\begin{rmk}[Comparison with traditional Gibbs posterior]\label{rmk:compare_traditional_gibbs}
In traditional PAC-Bayes analysis \citep[Equation 8]{germain2016a}, the optimal Gibbs posterior, let us write $\gibbs_{\text{emp}}$, is defined by
\begin{align*}
\left(\frac{d\gibbs_{\text{emp}}}{d\prior}\right)\left(h\right) = \frac{\exp\left(-n\widehat{R}(h)\right)}{\exx_{\prior}\exp\left(-n\widehat{R}\right)}
\end{align*}
where $\widehat{R}(h) = n^{-1}\sum_{i=1}^{n} l(h;\zz_{i})$ is the empirical risk. We have $n\widehat{R}$ and $\sqrt{n}\riskhat_{\trunc}$, but since scaling in the latter case should be done with $s \propto \sqrt{n}$, so in both cases the $1/n$ factor cancels out. In the special case of the negative log-likelihood loss, \citet{germain2016a} demonstrate that the optimal Gibbs posterior coincides with the classical Bayesian posterior. As noted by \citet{alquier2016a}, the optimal Gibbs posterior has shown strong empirical performance in practice, and variational approaches have been proposed as efficient alternatives to more traditional MCMC-based implementations. Comparison of both the computational and learning efficiency of our proposed ``robust Gibbs posterior'' with the traditional Gibbs posterior is a point of significant interest moving forward.
\end{rmk}

\section{Conclusions}\label{sec:conclusions}

The main contribution of this paper was to develop a novel approach to obtaining PAC-Bayesian learning guarantees, which admits deviations with exponential tails under weak moment assumptions on the underlying loss distribution, while still being computationally amenable. In this work, our chief interest was the fundamental problem of obtaining strong guarantees for stochastic learning algorithms which can reflect prior knowledge about the data-generating process, from which we derived a new robust Gibbs posterior. Moving forward, a deeper study of the statistical nature of this new stochastic learning algorithm, as well as computational considerations to be made in practice are of significant interest.

\appendix

\section{Technical appendix}

\subsection{Additional proofs}\label{sec:tech_proofs}

\paragraph{Relative entropy}

Here we recall the basic notions of the relative entropy, or Kullback-Leibler divergence, between two probability distributions. Consider $P$ and $Q$, both defined over a finite space $\Omega$. The relative entropy of $P$ from $Q$ is defined
\begin{align}
\KL(P;Q) \defeq \sum_{\omega \in \Omega} P(\omega) \log \left( \frac{P(\omega)}{Q(\omega)} \right),
\end{align}
where this definition clearly includes the possibility that $\KL(P;Q) = \infty$, which occurs only when $Q$ assigns zero probability to an element that $P$ assigns positive probability to.

More generally, when $\Omega$ is potentially uncountably infinite, consider two probabilities $P$ and $Q$ on the measurable space $(\Omega,\AAcal)$, where $\AAcal$ is an appropriate $\sigma$-algebra.\footnote{A certain degree of measure theory is assumed in this exposition, at approximately the level of the first few chapters of \citet{ash2000a}.} In this case, the relative entropy is defined
\begin{align}
\KL(P;Q) \defeq \int_{\Omega} \log\left( \frac{dP}{dQ} \right) \, dP, \qquad P \ll Q
\end{align}
where $dP/dQ$ denotes the Radon-Nikodym derivative of $P$ with respect to $Q$, typically called the density of $P$ with respect to $Q$. The basic underlying technical assumption, denoted $P \ll Q$, is that $P$ be absolutely continuous with respect to $Q$, meaning that $P(A) = 0$ whenever $Q(A)=0$, for $A \in \AAcal$. In the event that $P \ll Q$ does not hold, by convention we define $\KL(P;Q) \defeq \infty$. Recall that the Radon-Nikodym theorem guarantees that when $P \ll Q$, there exists a measurable function $g \geq 0$ such that
\begin{align*}
P(A) = \int_{A} g \, dQ, \qquad A \in \AAcal.
\end{align*}
This function $g$ is unique in the sense that if there exists another $f$ satisfying the above equality, then $f=g$ almost everywhere $[Q]$. This uniqueness justifies using the notation $dP/dQ$, and calling this function \textit{the} density of $P$ (rather than \textit{a} density of $P$).

\begin{lem}[Chain rule]\label{lem:chain_rule}
On measure space $(\Omega,\AAcal,Q)$, let $g \geq 0$ be a Borel-measurable function, and define measure $P$ by
\begin{align*}
P(A) = \int_{A} g \, dQ, \qquad A \in \AAcal.
\end{align*}
For any Borel-measurable function $f$ on $\Omega$, it follows that
\begin{align*}
\int_{\Omega} f \, dP = \int_{\Omega} f g \, dQ.
\end{align*}
\end{lem}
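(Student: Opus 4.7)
The plan is to invoke the classical ``standard machine'' from measure theory: verify the identity for indicator functions (where it will be essentially the defining property of $P$), extend to non-negative simple functions by linearity, extend to arbitrary non-negative measurable $f$ via the monotone convergence theorem, and finally decompose a signed $f$ into positive and negative parts. Each step is standard and turns the previous one into the next by a single well-known ingredient.

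More explicitly, the first step is to observe that for $f = I_{A}$ with $A \in \AAcal$, the left-hand side equals $P(A)$ while the right-hand side equals $\int_{A} g\,dQ$, and these two quantities agree by the very definition of $P$ given in the hypothesis. Linearity of the integral with respect to both $P$ and $Q$ then promotes the identity to every non-negative simple function $f = \sum_{i=1}^{k} c_{i} I_{A_{i}}$ with $c_{i} \geq 0$ and $A_{i} \in \AAcal$, using only the fact that $\int (\sum c_{i} I_{A_{i}})\,d\mu = \sum c_{i} \mu(A_{i})$ for any measure $\mu$.

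For a general non-negative Borel-measurable $f$, I would pick a pointwise monotone sequence $f_{n} \uparrow f$ of non-negative simple functions, which exists by the usual dyadic-truncation construction. Since $g \geq 0$, the products also satisfy $f_{n} g \uparrow f g$ pointwise, so two applications of the monotone convergence theorem, one under $P$ and one under $Q$, yield $\int f\,dP = \lim_{n} \int f_{n}\,dP = \lim_{n} \int f_{n} g\,dQ = \int f g\,dQ$. For a signed measurable $f$, I would split $f = f^{+} - f^{-}$, apply the non-negative case to each piece, and subtract, provided at least one of the two integrals is finite so that the subtraction is well-defined.

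There is no genuine obstacle here: the argument is the canonical template used whenever a measure-theoretic identity is propagated from indicators to general measurable functions, and none of the steps require anything beyond monotone convergence and linearity of the Lebesgue integral. The only mild care needed is in the signed step, where one should invoke the standard integrability convention to avoid an $\infty - \infty$ ambiguity; the preceding body of the paper only applies the lemma with $f = d\post/d\prior$ (non-negative), so that subtlety will not be exercised in the applications.
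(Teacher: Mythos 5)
Your proof is correct, and it is worth noting that the paper does not actually prove this lemma at all: the stated ``proof'' is a pointer to Section 2.2, Problem 4 of \citet{ash2000a}. The standard-machine argument you give (indicators by the defining property of $P$, non-negative simple functions by linearity, non-negative measurable $f$ by monotone convergence applied on both sides using $f_{n} g \uparrow f g$, then $f = f^{+} - f^{-}$) is precisely the canonical argument that the cited exercise expects, so in effect you have supplied the omitted proof rather than taken a different route. Your caveat about the signed case is the right one to raise, since the lemma as stated (``for any Borel-measurable $f$'') tacitly assumes the usual convention that at least one of $\int f^{+}\,dP$, $\int f^{-}\,dP$ is finite; one small correction to your closing remark is that the paper does apply the lemma with a signed integrand, namely $f = \log(dP/dQ)$ in the proof of non-negativity of the relative entropy (and implicitly in the change-of-measure computations), but there the negative part of $f g = (dP/dQ)\log(dP/dQ)$ is bounded below by $-1/e$, so the convention is satisfied and no $\infty - \infty$ ambiguity arises.
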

\begin{proof}
See section 2.2, problem 4 of \citet{ash2000a}.
\end{proof}

\begin{lem}[Non-negativity of relative entropy]\label{lem:KL_nonneg}
For any probabilities $P$ and $Q$, we have $\KL(P;Q) \geq 0$.
\end{lem}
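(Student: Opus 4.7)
The plan is to reduce the statement to the elementary pointwise inequality $g \log g \geq g - 1$ for $g \geq 0$ (with the convention $0 \log 0 := 0$), then integrate against $Q$ and use that $P$ and $Q$ are probability measures.

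First, I would dispose of the trivial case: if $P \not\ll Q$, then by definition $\KL(P;Q) = \infty$, and the bound holds. So I may assume $P \ll Q$ and invoke the Radon-Nikodym theorem to obtain a Borel-measurable density $g = dP/dQ$ with $g \geq 0$ and $P(A) = \int_A g\, dQ$ for every $A \in \AAcal$. Applying the chain rule (Lemma \ref{lem:chain_rule}) with $f = \log g$ turns the definition of relative entropy into an integral against $Q$:
\begin{align*}
\KL(P;Q) = \int_{\Omega} \log g \, dP = \int_{\Omega} g \log g \, dQ,
\end{align*}
where the integrand $g \log g$ is interpreted as $0$ on the set $\{g = 0\}$, consistent with the convention $0 \log 0 = 0$ and with the fact that $P(\{g = 0\}) = \int_{\{g=0\}} g\, dQ = 0$.

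Next, I would invoke the elementary scalar inequality
\begin{align*}
g \log g \;\geq\; g - 1 \qquad \text{for all } g \geq 0,
\end{align*}
which follows because $\varphi(x) = x \log x - x + 1$ is convex on $[0,\infty)$, has derivative $\log x$ vanishing at $x = 1$, and satisfies $\varphi(1) = 0$; hence $\varphi \geq 0$ everywhere on its domain. Integrating this pointwise inequality against $Q$ gives
\begin{align*}
\KL(P;Q) = \int_{\Omega} g \log g \, dQ \;\geq\; \int_{\Omega} (g - 1) \, dQ = P(\Omega) - Q(\Omega) = 1 - 1 = 0,
\end{align*}
which is the desired conclusion. (If the left-hand integral is $+\infty$, the inequality trivially still holds.)

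There is essentially no main obstacle here: the only care required is the standard measure-theoretic bookkeeping around the null set $\{g = 0\}$ and the convention $0 \log 0 = 0$, both of which are immediate from $P \ll Q$ and the definitions already established in the appendix. Everything else is a one-line application of the chain rule followed by a pointwise convexity inequality.
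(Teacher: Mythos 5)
Your proof is correct and follows essentially the same route as the paper: reduce to the density $g = dP/dQ$ via the chain rule, apply the pointwise inequality $x\log x \geq x-1$, and integrate against $Q$ to get $P(\Omega)-Q(\Omega)=0$. The only cosmetic differences are that you justify the scalar inequality by convexity (the paper cites $x < (1+x)\log(1+x)$) and that you handle the set $\{g=0\}$ and the convention $0\log 0 = 0$ explicitly, which the paper leaves implicit.
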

\begin{proof}[Proof of Lemma \ref{lem:KL_nonneg}]
If $P \ll Q$ does not hold, then $\KL(P;Q)=\infty$ and non-negativity follows trivially. As for the case of $P \ll Q$, we begin with the basic logarithmic inequality $x < (1+x)\log(1+x)$ for any $x > -1$ \citep{abramowitz1964a}. We thus have $x-1 < x\log(x)$ for any $x > 0$. Using this inequality and the chain rule (Lemma \ref{lem:chain_rule}), we have
\begin{align*}
\KL(P;Q) & = \exx_{P}\log\frac{dP}{dQ}\\
& = \exx_{Q}\frac{dP}{dQ}\log\frac{dP}{dQ}\\
& \geq \exx_{Q}\left( \frac{dP}{dQ} - 1 \right)\\
& = 0.
\end{align*}
The final equality uses the Radon-Nikodym theorem. 
\end{proof}

\begin{lem}[Lower bound on Bernoulli relative entropy]\label{lem:KL_lowbd_bernoulli}
The relative entropy between $\text{Bernoulli}(p)$ and $\text{Bernoulli}(q)$ is bounded below by $\KL(p;q) \geq 2(p-q)^{2}$.
\end{lem}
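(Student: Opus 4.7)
The plan is to prove the stated inequality by a direct second-derivative argument, treating $q \in (0,1)$ as fixed and viewing the difference as a function of $p$.

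First I would write out the Bernoulli relative entropy explicitly as
\begin{equation*}
\KL(p;q) = p\log\frac{p}{q} + (1-p)\log\frac{1-p}{1-q},
\end{equation*}
adopting the standard conventions $0\log 0 = 0$ and $\KL(p;q) = \infty$ when $q \in \{0,1\}$ with $p$ differing; the inequality to prove becomes trivial or vacuous in the boundary cases, so I restrict attention to $q \in (0,1)$ and $p \in [0,1]$. Define the auxiliary function
\begin{equation*}
f(p) \defeq \KL(p;q) - 2(p-q)^{2}.
\end{equation*}
The goal is to show $f(p) \geq 0$ on $[0,1]$.

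Next I would compute $f(q) = 0$ and check that the first derivative $f'(p) = \log\bigl(p(1-q)\bigr) - \log\bigl(q(1-p)\bigr) - 4(p-q)$ also vanishes at $p = q$. The crucial step is the second derivative, where the $p$-dependent logs contribute $1/p + 1/(1-p) = 1/\bigl(p(1-p)\bigr)$, giving
\begin{equation*}
f''(p) = \frac{1}{p(1-p)} - 4.
\end{equation*}
Since $p(1-p) \leq 1/4$ for all $p \in [0,1]$ (by AM--GM or by noting the maximum of $p \mapsto p(1-p)$ at $p=1/2$), we obtain $f''(p) \geq 0$ on $(0,1)$, so $f$ is convex there and extends continuously to the closed interval.

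Combining a vanishing value, vanishing first derivative, and non-negative second derivative via a second-order Taylor expansion with integral remainder,
\begin{equation*}
f(p) = \int_{q}^{p} (p-t) \, f''(t) \, dt \geq 0,
\end{equation*}
yields $f(p) \geq 0$ for every $p \in [0,1]$, which is the desired bound $\KL(p;q) \geq 2(p-q)^{2}$. There is no real obstacle here; the only care needed is handling the endpoint cases $p \in \{0,1\}$ where $f'$ and $f''$ blow up, which is resolved by continuity of $f$ on $[0,1]$ together with convexity on the open interval. This is the classical Pinsker-type bound specialized to Bernoulli, and the argument is essentially the shortest calculus derivation of it.
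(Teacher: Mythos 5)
Your proof is correct and takes essentially the same elementary-calculus route as the paper: both analyze $f = \KL(p;q) - 2(p-q)^{2}$ and both rest on the fact $t(1-t) \leq 1/4$. The only cosmetic difference is that you fix $q$ and argue via convexity in $p$ (second derivative $\frac{1}{p(1-p)} - 4 \geq 0$ plus a Taylor remainder), whereas the paper fixes $p$ and notes the first derivative in $q$ factors as $-(p-q)\bigl(\frac{1}{q(1-q)}-4\bigr)$, so $q=p$ is the unique minimizer; both are sound.
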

\begin{proof}[Proof of Lemma \ref{lem:KL_lowbd_bernoulli}]
Consider the function $f(p,q)$ defined
\begin{align*}
f(p,q) \defeq \KL(p;q) - 2(p-q)^{2}.
\end{align*}
Fix any arbitrary $p \in (0,1)$, and take the derivative with respect to $q$, noting that
\begin{align*}
\frac{d}{dq} f(p,q) = (-1)(p-q) \left(\frac{1}{q(1-q)}-4\right).
\end{align*}
Using the basic fact that $q(1-q) \leq 1/4$ for all $q \in (0,1)$, we have that the factor $(q(1-q))^{-1}-4$ is non-negative. Thus, the slope is negative when $p > q$, postive when $p < q$, and zero when $p = q$. Thus this is the only minimum of the function in $q$. Note that $f(p,p)=0$, and so for all $q \in (0,1)$ it follows that $f(p,q) \geq 0$. This holds for any choice of $p$ as well, implying the desired result by the definition of $f$.
\end{proof}

\begin{lem}[Chernoff bound for Bernoulli data]\label{lem:chernoff_tight}
Let $x_{1},\ldots,x_{n}$ be independent and identically distributed random variables, taking values $x \in \{0,1\}$. Write $\overbar{x} \defeq n^{-1} \sum_{i=1}^{n} x_{i}$ for the sample mean. The tails of the sample mean deviations can be bounded as
\begin{align*}
\prr\left\{ \overbar{x} - \exx x > \varepsilon \right\} & \leq \exp\left(-2n\varepsilon^{2}\right)\\
\prr\left\{ \overbar{x} - \exx x < -\varepsilon \right\} & \leq \exp\left(-2n\varepsilon^{2}\right)
\end{align*}
for any $0 < \varepsilon < 1-\exx x$.
\end{lem}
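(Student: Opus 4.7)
The plan is to combine the classical Chernoff--Cram\'er exponential moment method with the Pinsker-type inequality already established in Lemma \ref{lem:KL_lowbd_bernoulli}. Writing $p \defeq \exx x$, I would begin by fixing $t > 0$ and applying Markov's inequality to $\exp(t\sum_{i}(x_{i}-p))$, giving
\begin{align*}
\prr\{\overbar{x} - p > \varepsilon\} \leq \exp(-tn\varepsilon)\left(\exx e^{t(x-p)}\right)^{n},
\end{align*}
where independence lets the joint MGF factor. The Bernoulli MGF is immediate: $\exx e^{t(x-p)} = pe^{t(1-p)} + (1-p)e^{-tp}$.

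Next, I would minimize the right-hand side in $t$ by setting $q \defeq p+\varepsilon \in (p,1)$ (which is permitted by the hypothesis $\varepsilon < 1-p$). A direct differentiation yields the optimum $t^{\ast} = \log\bigl(q(1-p)/(p(1-q))\bigr) > 0$, and substituting back collapses the bound into the textbook relative-entropy form
\begin{align*}
\prr\{\overbar{x} - p > \varepsilon\} \leq \left(\frac{p}{q}\right)^{nq}\left(\frac{1-p}{1-q}\right)^{n(1-q)} = \exp\bigl(-n\,\KL(q;p)\bigr).
\end{align*}
This is a routine but somewhat fiddly calculation; it is the only real computational step in the argument.

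Now the punchline is supplied directly by Lemma \ref{lem:KL_lowbd_bernoulli}: since $\KL(q;p) \geq 2(q-p)^{2} = 2\varepsilon^{2}$, we conclude $\prr\{\overbar{x}-p > \varepsilon\} \leq \exp(-2n\varepsilon^{2})$, which is the first claim. For the lower tail, I would simply pass to the complementary variables $y_{i} \defeq 1 - x_{i}$, which are iid Bernoulli with mean $1-p$. Then $\overbar{x}-p < -\varepsilon$ is equivalent to $\overbar{y} - (1-p) > \varepsilon$, and since $\varepsilon < 1-p$ is precisely the condition needed for the upper-tail argument applied to the $y_{i}$, the same bound $\exp(-2n\varepsilon^{2})$ follows. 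The only genuine subtlety worth flagging is making sure the admissible range of $\varepsilon$ is respected in both directions; everything else is standard calculus and the invocation of Lemma \ref{lem:KL_lowbd_bernoulli}, so no conceptual obstacle is expected.
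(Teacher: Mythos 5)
Your argument is essentially the paper's: both proofs run the Chernoff--Cram\'er exponential-moment method for the upper tail, optimize the free parameter to land on the relative-entropy exponent $-n\,\KL(p+\varepsilon;p)$, and then invoke Lemma \ref{lem:KL_lowbd_bernoulli} to pass to $\exp(-2n\varepsilon^{2})$. The only place you diverge is the lower tail: the paper reruns the whole optimization with $\exx e^{-sx}$ to get the exponent $-n\,\KL(p-\varepsilon;p)$, whereas you pass to the complements $y_{i}=1-x_{i}$ and reuse the upper-tail bound, which is a little tidier. One small correction there: the admissible range for the upper-tail bound applied to the $y_{i}$ is $0<\varepsilon<1-\exx y=p$, not $\varepsilon<1-p$ as you assert, so for $p<1/2$ your reduction as justified does not cover $p\leq\varepsilon<1-p$. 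This is harmless in substance --- if $\varepsilon\geq p$ the event $\overbar{x}-p<-\varepsilon$ forces $\overbar{x}<0$ and hence has probability zero, so the claimed bound holds trivially --- but the sentence claiming the hypotheses match should be repaired (the paper's own lower-tail computation glosses over the analogous point, since $\KL(p-\varepsilon;p)$ only makes sense for $\varepsilon<p$).
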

\begin{proof}[Proof of Lemma \ref{lem:chernoff_tight}]
For random variable $x \sim \text{Bernoulli}(\theta)$, recall that using Markov's inequality, for any $t > 0$ we have
\begin{align*}
\prr\{X > \varepsilon\} & = \prr\{\exp(tX) > \exp(t\varepsilon)\}\\
 & \leq \exp(-t\varepsilon) \exx e^{tX}\\
 & = \exp(-t\varepsilon)\left( 1-\theta + \theta e^{t} \right).
\end{align*}
Taking the derivative of this upper bound with respect to $t$ and setting it to zero, we obtain the condition
\begin{align*}
t^{\ast}(\varepsilon) = \log\left(\frac{\varepsilon}{\theta}\right)\left(\frac{1-\theta}{1-\varepsilon}\right),
\end{align*}
where we write $t^{\ast}(\varepsilon)$ to emphasize the dependence of $t^{\ast}$ on $\varepsilon$. We must have $t^{\ast}(\varepsilon) > 0$ for the bounds to hold. The value being passed into the $\log$ function must be greater than one. Fortunately, some simple re-arranging of factors shows that
\begin{align*}
\left(\frac{\varepsilon}{\theta}\right)\left(\frac{1-\theta}{1-\varepsilon}\right) > 1 \iff \varepsilon > \theta.
\end{align*}
So we have $t^{\ast}(\varepsilon) > 0$ whenever $\theta < \varepsilon < 1$. Plugging this in, some algebra shows that
\begin{align*}
\exp(-t^{\ast}\varepsilon)\left( 1-\theta + \theta e^{t^{\ast}} \right) & = \exp\left( (1-\varepsilon) \log\left(\frac{1-\theta}{1-\varepsilon} \right) + \varepsilon \log\left(\frac{\theta}{\varepsilon} \right) \right)\\
& = \exp\left(-\KL(\varepsilon;\theta)\right)
\end{align*}
where we note that the form given in precisely the relative entropy between $\text{Bernoulli}(\varepsilon)$ and $\text{Bernoulli}(\theta)$.

Returning to the setting of interest with $x_{1},\ldots,x_{n}$ and the sample mean $\overbar{x}$, note that using Markov's inequality again and the iid assumption on the data, we have
\begin{align*}
\prr\{ \overbar{x} > \theta + \varepsilon \} & = \prr\left\{ \sum_{i=1}^{n}x_{i} > n(\theta + \varepsilon) \right\}\\
& \leq \left( \exp(-t(\theta+\varepsilon)) \exx_{\ddist}e^{tx} \right)^{n}.
\end{align*}
Setting $t=t^{\ast}(\varepsilon+\theta)$ then, and using a classical lower bound on the relative entropy (Lemma \ref{lem:KL_lowbd_bernoulli}), we obtain
\begin{align}
\nonumber
\prr\{ \overbar{x} > \theta + \varepsilon \} & \leq \left( \exp\left(-\KL(\theta+\varepsilon;\theta)\right) \right)^{n}\\
\nonumber
& \leq \left( \exp\left(-2((\theta+\varepsilon)-\theta)^{2}\right) \right)^{n}\\
\label{eqn:chernoff_tight_upbd}
& = \exp\left(-2n\varepsilon^{2}\right).
\end{align}
Note that since $\varepsilon+\theta > \theta$ for all $\varepsilon > 0$, it follows that $t^{\ast}(\varepsilon+\theta) > 0$ for all $0 < \varepsilon < 1-\theta$.

Next we seek a lower bound on $\overbar{x}-\theta$, equivalently an upper bound on $-\overbar{x}+\theta$. This can be done by essentially the same process. Again for $X \sim \text{Bernoulli}(\theta)$, using Markov's inequality, we have for any $s > 0$ that
\begin{align*}
\prr\{X - \theta < -\varepsilon\} & = \prr\{-X > \varepsilon-\theta\}\\
& = \prr\{\exp(-sX) > \exp(s(\varepsilon-\theta))\}\\
 & \leq \exp(-s(\varepsilon-\theta)) \exx e^{-sX}\\
 & = \exp(s(\theta-\varepsilon))\left( 1-\theta + \theta e^{-s} \right).
\end{align*}
This is, of course, a rather familiar form. Writing $a = \theta-\varepsilon$, note that the function
\begin{align*}
\exp(sa)\left( 1-\theta + \theta e^{-s} \right)
\end{align*}
is minimized as a function of $s$ at
\begin{align*}
s^{\ast} = \log\left(\frac{1-a}{1-\theta}\right)\left(\frac{\theta}{a}\right),
\end{align*}
which analogous to earlier in the proof, satisfies $s^{\ast} > 0$ only when $\theta > a = \theta-\varepsilon$, which is to say whenever $\varepsilon > 0$. Keeping with the $a$ notation, note that plugging in $s^{\ast}$ to the bound above, we have
\begin{align*}
\exp(s^{\ast}a)\left( 1-\theta + \theta e^{-s^{\ast}} \right) & = \exp\left( (1-a) \log\left(\frac{1-\theta}{1-a} \right) + a \log\left(\frac{\theta}{a} \right) \right)\\
& = \exp\left(-\KL(a;\theta)\right),
\end{align*}
the exact same bound as before. It follows that
\begin{align*}
\prr\{ \overbar{x} - \theta < -\varepsilon \} & = \prr\{ -\overbar{x} > \varepsilon - \theta\}\\
& = \prr\left\{ -\sum_{i=1}^{n}x_{i} > n(\varepsilon-\theta) \right\}\\
& \leq \left( \exp(s(\theta-\varepsilon)) \exx_{\ddist}e^{-sx} \right)^{n}.
\end{align*}
Setting $s=t^{\ast}$ with $a = \theta-\varepsilon$, in a form analogous to the upper bounds done earlier, we have
\begin{align}
\nonumber
\prr\{ \overbar{x} - \theta < -\varepsilon \} & \leq \left( \exp\left(-\KL(\theta-\varepsilon;\theta)\right) \right)^{n}\\
\nonumber
& \leq \left( \exp\left(-2((\theta-\varepsilon)-\theta)^{2}\right) \right)^{n}\\
\label{eqn:chernoff_tight_lowbd}
& = \exp\left(-2n\varepsilon^{2}\right).
\end{align}
Taking a union bound over the two ``bad events'' in (\ref{eqn:chernoff_tight_upbd}) and (\ref{eqn:chernoff_tight_lowbd}), we have
\begin{align*}
\prr\{ |\overbar{x} - \theta| < -\varepsilon \} & \leq \prr\{\overbar{x} - \theta < -\varepsilon\} \cup \prr\{\overbar{x} - \theta > \varepsilon\}\\
& \leq \prr\{\overbar{x} - \theta < -\varepsilon\} + \prr\{\overbar{x} - \theta > \varepsilon\}\\
& \leq 2\exp\left(-2n\varepsilon^{2}\right),
\end{align*}
concluding the proof.
\end{proof}

\paragraph{Fundamental PAC-Bayes identity}

The following identity is fundamental to theoretical PAC-Bayesian analysis, and is a well-known result. \citet[p.~159--160]{catoni2004SLT} for example gives a concise proof, but for completeness, we provide a step-by-step proof of this result here. The key elements of the following theorem are the prior $\prior \in \proball$, and candidate posterior $\post \in \proball$. 

\begin{thm}\label{thm:main_PB_identity}
For any measurable function $h$,
\begin{align*}
\log \exx_{\prior} \exp(h) = \sup_{\post \in \proball} \left(\sup_{b \in \RR} \exx_{\post}(b \wedge h) - K(\post;\prior) \right).
\end{align*}
In the special case where $h$ is bounded above, then the above equality simplifies to
\begin{align*}
\log \exx_{\prior} \exp(h) = \sup_{\post \in \proball} \left( \exx_{\post}h - K(\post;\prior) \right).
\end{align*}
\end{thm}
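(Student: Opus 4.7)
The plan is to handle the bounded case first by an explicit extremizer argument, and then extend to general measurable $h$ by truncation. For the bounded case, I would introduce the tilted probability measure $\prior^{\ast}$ characterized by
\[
\frac{d\prior^{\ast}}{d\prior}(h') = \frac{\exp(h(h'))}{\exx_{\prior} \exp(h)},
\]
which is a genuine probability measure whenever $h$ is bounded above, so $Z := \exx_{\prior}\exp(h) \in (0,\infty)$. The idea is to rewrite the objective $\exx_{\post} h - K(\post;\prior)$ as $\log Z - K(\post;\prior^{\ast})$, at which point non-negativity of relative entropy (Lemma \ref{lem:KL_nonneg}) immediately yields the supremum, attained at $\post = \prior^{\ast}$.

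Concretely, for any $\post \ll \prior$ (otherwise both sides are handled trivially since $K(\post;\prior) = \infty$ makes the corresponding term $-\infty$ and the bound is vacuous), I would use the chain rule (Lemma \ref{lem:chain_rule}) to write
\[
\log \frac{d\post}{d\prior} = \log \frac{d\post}{d\prior^{\ast}} + \log \frac{d\prior^{\ast}}{d\prior} = \log \frac{d\post}{d\prior^{\ast}} + h - \log Z,
\]
and then integrate against $\post$ to obtain $K(\post;\prior) = K(\post;\prior^{\ast}) + \exx_{\post} h - \log Z$. Rearranging gives $\exx_{\post} h - K(\post;\prior) = \log Z - K(\post;\prior^{\ast})$. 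Taking the supremum over $\post \in \proball$ and invoking $K(\post;\prior^{\ast}) \geq 0$ with equality at $\post = \prior^{\ast}$ establishes the identity in the bounded case.

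For the general case, I would apply the bounded identity to the truncations $h_{b} := b \wedge h$, giving $\log \exx_{\prior}\exp(h_{b}) = \sup_{\post}(\exx_{\post} h_{b} - K(\post;\prior))$ for every $b \in \RR$. Taking the supremum over $b$ on both sides and exchanging the order of the two suprema on the right (which is always valid) produces the claimed formula directly. The final piece is to verify that $\sup_{b} \log \exx_{\prior}\exp(h_{b}) = \log \exx_{\prior}\exp(h)$, which follows from monotone convergence applied to the increasing sequence $\exp(h_{b}) \uparrow \exp(h)$ as $b \uparrow \infty$ (this covers also the case $\exx_{\prior}\exp(h) = \infty$, where both sides equal $+\infty$). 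In the bounded-above specialization, $b \wedge h = h$ for all $b$ large enough, so the inner supremum over $b$ trivializes to $\exx_{\post} h$, recovering the simpler form.

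The main obstacle I expect is the bookkeeping around degenerate cases: when $\post \not\ll \prior$ the decomposition via $\prior^{\ast}$ is not directly available, and when $h$ is unbounded above one must be careful that $\exx_{\post} h$ may fail to exist while $\exx_{\post}(b \wedge h)$ is always well-defined (which is precisely why the statement uses $b \wedge h$ in the general form). The sup-over-$b$ formulation sidesteps both issues, so the bulk of the work is really just the algebraic identity in the bounded case plus a clean monotone-convergence argument.
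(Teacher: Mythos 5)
Your proposal is correct and follows essentially the same route as the paper: the tilted measure $\prior^{\ast}$ with density $\exp(h)/\exx_{\prior}\exp(h)$, the decomposition $\exx_{\post}h - \KL(\post;\prior) = \log\exx_{\prior}\exp(h) - \KL(\post;\prior^{\ast})$ combined with non-negativity of the relative entropy, and then truncation $b \wedge h$ with monotone convergence and a swap of suprema for the general case. The only difference is that the paper spells out the chain-rule/Radon--Nikodym bookkeeping and the supremum interchange in more detail, but the substance of the argument is identical.
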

\begin{proof}[Proof of Theorem \ref{thm:main_PB_identity}]
The key to this proof is a simple expansion of the relative entropy between an arbitrary $\post \in \proball$ and a specially modified prior $\prior^{\ast}$. This $\prior^{\ast}$ is defined in terms of the following requirement on the density function $d\prior^{\ast}/d\prior$: almost everywhere $[\prior]$, we must have
\begin{align*}
\left(\frac{d\prior^{\ast}}{d\prior}\right)(\omega) = g^{\ast}(\omega) \defeq \frac{\exp(h(\omega))}{\exx_{\prior}\exp(h)}.
\end{align*}
Satisfying this is easy by construction. Just define $\prior^{\ast}$ using $g^{\ast}$, as
\begin{align*}
\prior^{\ast}(A) \defeq \int_{A} g^{\ast} \, d\prior, \qquad A \in \AAcal.
\end{align*}
Since $g^{\ast} \geq 0$, it follows that $\prior^{\ast}$ is non-negative, and thus a measure on $(\Omega,\AAcal)$. As long as $\exp(h)$ is $\prior$-integrable, we have
\begin{align*}
\int_{A} g^{\ast} \, d\prior = \left(\exx_{\prior}\exp(h)\right)^{-1} \int_{A} \exp(h(\omega)) \, d\prior(\omega) \leq 1,
\end{align*}
and also that $\prior^{\ast}(\Omega) = 1$, so $\prior^{\ast} \in \proball$. Furthermore, note that $\prior^{\ast} \ll \prior$ and $\prior \ll \prior^{\ast}$.

Now, before proving all the necessary facts, let us run through the primary step of the argument using the following series of identities, which should be rather intuitive even at first glance:
\begin{align}
\nonumber
\KL(\post;\prior^{\ast}) & = \exx_{\post} \log\left(\frac{d\post}{d\prior^{\ast}}\right)\\
\label{eqn:main_PB_identity_use_chain}
& = \exx_{\post} \log\left(\frac{d\post}{d\prior}\frac{d\prior}{d\prior^{\ast}}\right)\\
& = \exx_{\post}\left( \log\frac{d\post}{d\prior} + \log\frac{d\prior}{d\prior^{\ast}} \right)\\
\nonumber
& = \exx_{\post}\left( \log\frac{d\post}{d\prior} + \log\exx_{\prior}\exp(h) - h \right).
\end{align}
When the left-hand side is finite, so is the right-hand side, and they are equal. Furthermore, when the left-hand side is infinite, so is the right-hand side.

To prove the above chain of equalities, first start by writing $g(\omega) = (d\prior^{\ast}/d\prior)(\omega)$, and observe that by the chain rule (Lemma \ref{lem:chain_rule}), we have
\begin{align*}
\int_{A} \left(\frac{1}{g(\omega)}\right) \, d\prior^{\ast} = \int_{A} \left(\frac{1}{g(\omega)}\right) g(\omega) \, d\prior(\omega) = \prior(A),
\end{align*}
for any $A \in \AAcal$. By $\prior \ll \prior^{\ast}$ and the Radon-Nikodym theorem, it follows that almost everywhere $[\prior]$, we have
\begin{align}\label{eqn:main_PB_identity_density_reciprocal}
\left(\frac{d\prior}{d\prior^{\ast}}\right)(\omega) = \frac{1}{g(\omega)} = \frac{\exx_{\prior}\exp(h)}{\exp(h(\omega))},
\end{align}
which justifies writing $d\prior/d\prior^{\ast} = 1/(d\prior^{\ast}/d\prior)$. Another basic fact using the chain rule (Lemma \ref{lem:chain_rule}) is that for each $A \in \AAcal$,
\begin{align*}
\int_{A} \left(\frac{d\post}{d\prior}\right)(\omega)\left(\frac{\exx_{\prior}\exp(h)}{\exp(h(\omega))}\right) \, d\prior^{\ast}(\omega) & = \int_{A} \left(\frac{d\post}{d\prior}\right)(\omega)\left(\frac{\exx_{\prior}\exp(h)}{\exp(h(\omega))}\right) \left(\frac{\exp(h(\omega))}{\exx_{\prior}\exp(h)}\right) \, d\prior(\omega)\\
& = \int_{A} \frac{d\post}{d\prior} \, d\prior\\
& = \post(A)\\
& = \int_{A} \frac{d\post}{d\prior^{\ast}} \, d\prior^{\ast}
\end{align*}
where the final three equalities follow from the Radon-Nikodym theorem and $\post \ll \prior$ and $\post \ll \prior^{\ast}$. Taking this basic fact and plugging in (\ref{eqn:main_PB_identity_density_reciprocal}), we have
\begin{align*}
\int_{A} \frac{d\post}{d\prior}\frac{d\prior}{d\prior^{\ast}} \, d\prior^{\ast} = \int_{A} \frac{d\post}{d\prior^{\ast}} \, d\prior^{\ast}, \qquad A \in \AAcal
\end{align*}
and then by uniqueness of the density function, that almost everywhere $[\prior^{\ast}]$,
\begin{align*}
\frac{d\post}{d\prior}\frac{d\prior}{d\prior^{\ast}} = \frac{d\post}{d\prior^{\ast}}.
\end{align*}
Since any statement a.e.~$[\prior^{\ast}]$ holds a.e.~$[\post]$ by $\post \ll \prior^{\ast}$, this proves (\ref{eqn:main_PB_identity_use_chain}).

The first equality holds from the definition of relative entropy, and with (\ref{eqn:main_PB_identity_use_chain}) now established, the remaining two equalities follow immediately from (\ref{eqn:main_PB_identity_density_reciprocal}).

The next step is to show that we can meaningfully write
\begin{align}\label{eqn:main_PB_identity_linearity}
\exx_{\post}\left( \log\frac{d\post}{d\prior} + \log\exx_{\prior}\exp(h) - h \right) = \KL(\post;\prior) + \log\exx_{\prior}\exp(h) - \exx_{\post} h
\end{align}
in the sense that both sides are well-defined, and take on equal values in $\RR\cup\{\infty\}$. To prove this, we would like to use the basic additivity property of Lebesgue integrals \citep[Theorem 1.6.3]{ash2000a}. First observe that the integrand of the left-hand side is well-defined and equal to $\KL(\post;\prior^{\ast})$. We need to show that the right-hand side is also well-defined. The first term $\KL(\post;\prior) \geq 0 > -\infty$ by Lemma \ref{lem:KL_nonneg}, and thus while it cannot be $-\infty$, it takes values in $\RR \cup \{\infty\}$. The remaining term depends on $h$. In the case that $h$ is bounded above, we have that $\exx_{\post} h < \infty$, meaning that the right-hand side of (\ref{eqn:main_PB_identity_linearity}) is well-defined, which implies via additivity that both sides of (\ref{eqn:main_PB_identity_linearity}) take values in $\RR\cup\{\infty\}$, and are equal in both the finite and infinite cases.

Note that when $h$ is not bounded above, this leaves the possibility that $\exx_{\post} h = \infty$, which would lead to the ambiguous $\infty - \infty$ on the right-hand side of (\ref{eqn:main_PB_identity_linearity}), spoiling the additivity property.

With the assumption of $h$ bounded above, and re-arranging some terms, we can write
\begin{align*}
\KL(\post;\prior^{\ast}) = \log\exx_{\prior}\exp(h) - \left(\exx_{\post}h - \KL(\post;\prior)\right).
\end{align*}
By non-negativity of the relative entropy (Lemma \ref{lem:KL_nonneg}), the left-hand side is minimized when $\post = \prior^{\ast}$, in which case it takes the value $\KL(\prior^{\ast};\prior^{\ast}) = 0$. Note that as $\prior^{\ast} \in \proball$, the supremum of the term in parentheses on the right-hand side is achieved at $\post = \prior^{\ast}$. This means we can write
\begin{align}
\label{eqn:main_PB_identity_bounded_case}
\log\exx_{\prior}\exp(h) = \sup_{\post \in \proball}\left( \exx_{\post}h - \KL(\post;\prior) \right)
\end{align}
for $h$ bounded above.

To complete the proof, we must consider the case where $h$ is unbounded. As preparation, create a measurable function sequence $(h_{k})$ defined by $h_{k} = b_{k} \wedge h$, where $(b_{k})$ satisfies $b_{k} \uparrow \infty$ and is increasing. Since we have
\begin{align*}
\lim\limits_{k \to \infty} \exp(h_{k}(\omega)) = \exp(h(\omega))
\end{align*}
pointwise in $\omega \in \Omega$, and $h_{k} \leq h_{k+1} \leq \ldots \leq h$ for any $k$, by the monotone convergence theorem, we have
\begin{align*}
\lim\limits_{k \to \infty} \exx_{\prior}\exp(h_{k}) = \exx_{\prior}\exp(h),
\end{align*}
and using the continuity of the log function,
\begin{align*}
\lim\limits_{k \to \infty} \log\exx_{\prior} \exp(h_{k}) = \log\left( \lim\limits_{k \to \infty}\exx_{\prior}\exp(h_{k}) \right) = \log\exx_{\prior}\exp(h).
\end{align*}
This means we can write
\begin{align}
\nonumber
\log\exx_{\prior} \exp(h) & = \sup_{b \in \RR} \log\exx_{\prior} \exp(b \wedge h)\\
\label{eqn:main_PB_identity_unbd_case}
& = \sup_{b \in \RR} \sup_{\post \in \proball} \left( \exx_{\post}(b \wedge h) - \KL(\post;\prior) \right)\\
\label{eqn:main_PB_identity_sup_swap}
& = \sup_{\post \in \proball} \sup_{b \in \RR} \left( \exx_{\post}(b \wedge h) - \KL(\post;\prior) \right)\\
\nonumber
& = \sup_{\post \in \proball} \left( \sup_{b \in \RR}\exx_{\post}(b \wedge h) - \KL(\post;\prior) \right).
\end{align}
Since for any $b \in \RR$, we have that $b \wedge h \leq b < \infty$, we can use (\ref{eqn:main_PB_identity_bounded_case}), the key identity for the case of bounded functions, which immediately implies (\ref{eqn:main_PB_identity_unbd_case}). Finally, regarding the swap of supremum operations, note that the function of interest is
\begin{align*}
f(\post,b) = \exx_{\post}(b \wedge h) - \KL(\post;\prior), \qquad (\post,b) \in \proball \times \RR.
\end{align*}
For an arbitrary sequence $(\post_{k},b_{k})$, observe that for all $k$,
\begin{align*}
f(\post_{k},b_{k}) & \leq \sup_{\post} f(\post,b_{k}) \leq \sup_{b} \sup_{\post} f(\post,b)\\
f(\post_{k},b_{k}) & \leq \sup_{b} f(\post_{k},b) \leq \sup_{\post} \sup_{b} f(\post,b).
\end{align*}
If $f$ is unbounded on $\proball \times \RR$, then the sequence $(\post_{k},b_{k})$ can be constructed such that $f(\post_{k},b_{k}) \to \infty$ as $k \to \infty$, implying that in both cases the supremum is infinite, so equality holds trivially. On the other hand, when $f$ is bounded above, the sequence can be constructed such that $f(\post_{k},b_{k}) \to B$, and so the above inequalities imply
\begin{align*}
B & = \lim\limits_{k \to \infty} f(\post_{k},b_{k}) \leq \sup_{b} \sup_{\post} f(\post,b) \leq B\\
B & = \lim\limits_{k \to \infty} f(\post_{k},b_{k}) \leq \sup_{\post} \sup_{b} f(\post,b) \leq B
\end{align*}
and thus, as desired, the step to (\ref{eqn:main_PB_identity_sup_swap}) holds. This concludes the chain of equalities and the proof.
\end{proof}

\paragraph{A relative entropy bound}

One key technique based on the identity in Theorem \ref{thm:main_PB_identity} is given as follows, closely following the approach of \citet{catoni2017a}. Assume we are given some independent data $x_{1},\ldots,x_{n}$, assumed to be copies of the random variable $x \sim \ddist$. In addition, let $\epsilon_{1},\ldots,\epsilon_{n}$ similarly be independent observations of ``strategic noise'' of sorts, with distribution $\epsilon \sim \post$ that we can design. 

\begin{proof}[Proof of Lemma \ref{lem:cat17type_KLbound}]
Start with the following elementary inequality: if $X$ is a random variable such that $\exx e^{X} \leq 1$, then for any $\delta \in (0,1)$, we have that $X$ exceeds $\log(\delta^{-1})$ with probability no greater than $\delta$. To see this, observe that
\begin{align}
\label{eqn:exp1_tails}
\prr\{X \geq \log(\delta^{-1}) \} = \prr\{\exp(X) \geq 1/\delta\} = \exx I\{\delta\exp(X) \geq 1\} \leq \exx \delta e^{X}  \leq \delta.
\end{align}
Next, we set the function $h$ in Theorem \ref{thm:main_PB_identity} to be a sum of functions depending on both the data and the noise, as
\begin{align*}
h(\epsilon) = \sum_{i=1}^{n}f(x_{i},\epsilon) - n\log\exx_{\ddist}\exp(f(x,\epsilon)).
\end{align*}
Since $f$ is bounded on $\RR^{2}$ by hypothesis, we have that $h$ is also bounded. Using Theorem \ref{thm:main_PB_identity}, we have
\begin{align*}
B_{0} & \defeq \sup_{\post \in \proball} \left( \exx_{\rho}h(\epsilon) - K(\post;\prior) \right)\\
& = \log \exx_{\prior} \left( \frac{\exp\left(\sum_{i=1}^{n}f(x_{i},\epsilon) \right)}{(\exx_{\ddist}\exp(f(x,\epsilon)))^{n}} \right).
\end{align*}
Next, taking expectation with respect to the sample, observe that
\begin{align*}
\exx\exp(B_{0}) & = \exx \int \left( \frac{\exp\left(\sum_{i=1}^{n}f(x_{i},\epsilon) \right)}{(\exx_{\ddist}\exp(f(x,\epsilon)))^{n}} \right) \, \prior(\epsilon)\\
& = \int \left( \frac{\exx\exp\left(\sum_{i=1}^{n}f(x_{i},\epsilon) \right)}{(\exx_{\ddist}\exp(f(x,\epsilon)))^{n}} \right) \, \prior(\epsilon)\\
& = 1.
\end{align*}
The above equalities follow from straightforward algebraic manipulations, independence of the data, and taking the integration over the sample inside the integration over the noise, valid using Fubini's theorem. Applying (\ref{eqn:exp1_tails}) with $X=B_{0}$, noting that the only randomness is due to the sample, it holds that for probability at least $1-\delta$, uniform in the choice of $\post$, we have%
\begin{align*}
\exx_{\rho}h(\epsilon) - K(\post;\prior) \leq \log(\delta^{-1}).
\end{align*}
Plugging in the above definition of $h$ and dividing by $n$, we have
\begin{align*}
\frac{1}{n} \sum_{i=1}^{n} \int f(x_{i},\epsilon) \, d\post(\epsilon) \leq \int \log\exx_{\ddist}\exp(f(x,\epsilon)) \, d\post(\epsilon) + \frac{\KL(\post;\prior) + \log(\delta^{-1})}{n}.
\end{align*}
Finally, since the noise observations are iid, we have
\begin{align*}
\frac{1}{n} \sum_{i=1}^{n} \int f(x_{i},\epsilon) \, d\post(\epsilon) & = \frac{1}{n} \sum_{i=1}^{n} \int f(x_{i},\epsilon_{i}) \, d\post(\epsilon_{i})\\
& = \exx \left( \frac{1}{n} \sum_{i=1}^{n} f(x_{i},\epsilon_{i}) \right)
\end{align*}
with expectation over the noise sample. This equality yields the desired result.
\end{proof}

\bibliographystyle{apalike}
\bibliography{refs.bib}

\end{document}